\newcolumntype{Y}{>{\raggedright\arraybackslash}X}
\newcolumntype{P}[1]{>{\raggedright\arraybackslash}p{#1}}
\def\eqref#1{equation~\ref{#1}}
\def\1{\bm{1}}
\DeclareMathAlphabet{\mathsfit}{\encodingdefault}{\sfdefault}{m}{sl}
\SetMathAlphabet{\mathsfit}{bold}{\encodingdefault}{\sfdefault}{bx}{n}
\newtheorem{proposition}{Proposition}
\newtheorem{definition}{Definition}
\newtheorem{observation}{Observation}
\newtheorem{example}{Example}
\renewcommand{\paragraph}[1]{\vspace{.1em}\noindent\textbf{#1}}
\titlespacing\section{0pt}{3pt plus 1pt minus 1pt}{2pt plus 1pt minus 1pt}
\titlespacing\subsection{0pt}{2pt plus 1pt minus 1pt}{1pt plus 1pt minus 1pt}
\author{%
  \textbf{Aayush Karan}\textsuperscript{1},
  \textbf{Yilun Du}\textsuperscript{1} \\[0.6em]
  \textsuperscript{1}Harvard University \\[0.6em]
  \textbf{\href{https://aakaran.github.io/reasoning_with_sampling/}
     {\textcolor{blue!60!black}{\faGlobe\enspace{Website}}}
    \quad
  \href{https://github.com/aakaran/reasoning-with-sampling}
     {\textcolor{blue!60!black}{\faGithub\enspace{Code}}}}
}
\title{Reasoning with Sampling: \\ Your Base Model is Smarter Than You Think}
\begin{document}

\maketitle

\begin{abstract}
Frontier reasoning models have exhibited incredible capabilities across a wide array of disciplines, driven by posttraining large language models (LLMs) with reinforcement learning (RL). However, despite the widespread success of this paradigm, much of the literature has been devoted to disentangling truly novel behaviors that emerge during RL but are not present in the base models. In our work, we approach this question from a different angle, instead asking whether comparable reasoning capabilites can be elicited from base models at inference time by \textit{pure sampling}, \textit{without any additional training}. Inspired by Markov chain Monte Carlo (MCMC) techniques for sampling from sharpened distributions, we propose a simple iterative sampling algorithm leveraging the base models' own likelihoods. Over different base models, we show that our algorithm offers substantial boosts in reasoning that nearly match and even outperform those from RL on a wide variety of single-shot tasks, including MATH500, HumanEval, and GPQA. Moreover, our sampler avoids the collapse in diversity over multiple samples that is characteristic of RL-posttraining. Crucially, our method does not require training, curated datasets, or a verifier, suggesting broad applicability beyond easily verifiable domains.

\end{abstract}

\section{Introduction}\label{sec:intro}
Reinforcement learning (RL) has become the dominant paradigm for enhancing the reasoning capabilities of large language models (LLMs) \citep{guo2025deepseekr1, hu2025openreasonerzero}. Equipped with a reward signal that is typically automatically verifiable, popular RL techniques have been successfully applied to posttrain frontier models, leading to sizeable performance gains in domains like math, coding, and science \citep{hendrycks2021math, li2022alphacode, rein2024gpqa}. 

Despite the widespread empirical success of RL for LLMs, a large body of literature has centered around the following question: are the capabilities that emerge during RL-posttraining \textit{fundamentally novel behaviors} that are \textit{not} present in the base models? This is the question of \textit{distribution sharpening} \citep{he2025rewarding,shao2025spuriousrewards,yue2025doesrlincentivizereasoning}: that is, whether the posttrained distribution is simply a ``sharper'' version of the base model distribution, instead of placing mass on reasoning traces the base model is unlikely to generate. 

Several works point towards the difficulty in learning new capabilities with RL-posttraining. \cite{he2025rewarding,song2025-outcomebasedexploration} compare the pass@$k$ (multi-shot) scores of base models with posttrained models, finding that for large $k$, base models actually outperform while the latter suffer from degraded generation diversity. In such cases, RL appears to redistribute pass@$k$ performance to single-shot performance at the expense of multi-shot reasoning. \cite{yue2025doesrlincentivizereasoning} also notes that the reasoning traces post-RL are tightly concentrated at high likelihoods/confidences under the base model, seemingly drawing from existing high-likelihood capabilities. We illustrate this point in our own experiments in Figure  \ref{fig:conf}. Regardless, the advantage of RL-posttraining for single-shot reasoning has remained, as of yet, undeniable.

In this paper, we present a surprising result: \textit{sampling directly from the base model can achieve single-shot reasoning capabilites on par with those from RL}.

We propose a sampling algorithm for base models that leverages additional compute at inference time, achieving single-shot performance that \textit{nearly matches} RL-posttraining on \textit{in-domain} reasoning tasks and can even \textit{outperform} on \textit{out-of-domain} reasoning tasks. Furthermore, we observe that generation diversity \textit{does not degrade} with our sampler; in fact, our pass@$k$ (multi-shot) performance \textit{strongly outperforms} RL. We benchmark specifically against Group Relative Policy Optimization (GRPO), which is the standard RL algorithm for enhancing LLM reasoning \citep{shao2024deepseekmath}.

\begin{figure}[t]
  \centering
  \includegraphics[width=\linewidth]{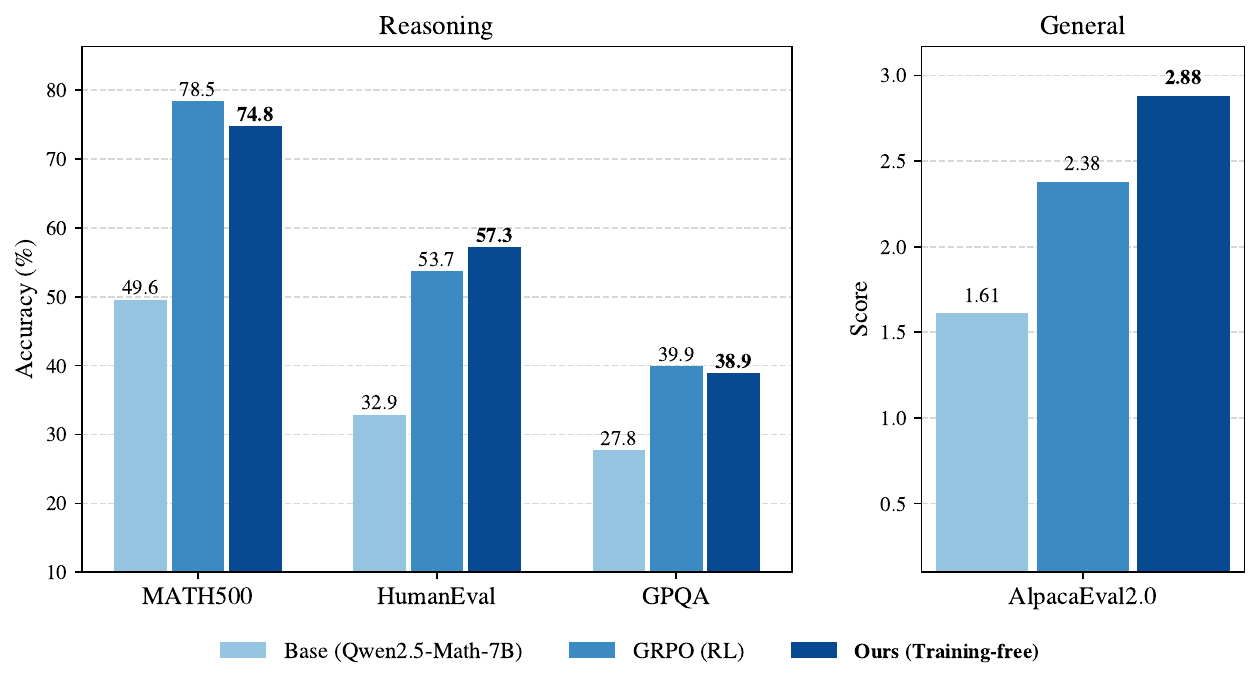}
  \captionsetup{font=small}
  \caption{\textbf{Our sampling algorithm can match and outperform RL-posttraining.} Left: we compare our sampling algorithm (ours) against the base model (base) and RL-posttraining (GRPO) on three \textit{verifiable reasoning} tasks (MATH500, HumanEval, GPQA). Right: we compare them on an \textit{unverifiable general task} (AlpacaEval2.0). Our algorithm achieves comparable performance to GRPO within the posttraining domain (MATH500) but can \textit{outperform} on  out-of-domain tasks such as HumanEval and AlpacaEval.}
  \label{fig:teaser}
\end{figure}

Crucially, our algorithm is \textit{training-free}, \textit{dataset-free}, and \textit{verifier-free}, avoiding some of the inherent weaknesses of RL methods including extensive hyperparameter sweeps to avoid training instabilities, the need to curate a diverse and expansive posttraining dataset, and the lack of guaranteed access to a ground truth verifier/reward signal \citep{prabhudesai2025maximizingconfidence}. 

Our contributions can be summarized as follows:
\begin{itemize}[leftmargin=*]
    \item[{\bf i)}] We introduce the \textit{power distribution} as a useful sampling target for reasoning tasks. Since it can be explicitly specified with a base LLM, no additional training is  required. 

    \item[{\bf ii)}] We further introduce an approximate sampling algorithm for the power distribution using a Markov chain Monte Carlo (MCMC) algorithm that iteratively resamples token subsequences according to their base model likelihoods.   

    \item[{\bf iii)}] We empirically demonstrate the effectiveness of our  algorithm over a range of models (Qwen2.5-Math-7B, Qwen2.5-7B, Phi-3.5-mini-instruct) and reasoning tasks (MATH500, HumanEval, GPQA, AlpacaEval 2.0). Our results show that sampling directly from the base model can achieve results on par with GRPO. In fact, for some out-of-domain tasks, our algorithm consistently \textit{outperforms} the RL baseline. Moreover, over multiple samples, we avoid the collapse in diversity afflicting RL-posttraining, achieving the best of both worlds in terms of single-to-few-shot reasoning capabilities as well as sample diversity.
\end{itemize}
Our results collectively illustrate that existing base models are much more capable at single-shot reasoning than current sampling methods reveal.

\section{Related Works}

\paragraph{Reinforcement learning for LLMs.} RL has been instrumental in posttraining LLMs. Early on, RL with human feedback (RLHF) \citep{ouyang2022traininglfh} was developed as a technique to align LLMs with human preferences using a trained reward model. Recently, RL with verifiable rewards (RLVR) has emerged as a powerful new posttraining technique, where many works \citep{guo2025deepseekr1,lambert2024tulu3,hu2025openreasonerzero,zeng2025simplerlzoo} discovered that a simple, end-of-generation reward given by an automated verifier could substantially enhance performance on difficult reasoning tasks in mathematics and coding. The Group Relative Policy Optimization (GRPO) algorithm was at the center of these advances \citep{shao2024deepseekmath}. Building off of this success, many subsequent works have examined using reward signals derived from internal signals such as self-entropy \citep{zhao2025learning}, confidence \citep{prabhudesai2025maximizingconfidence}, and even random rewards \citep{shao2025spuriousrewards}. Similar to these works, our paper examines base model likelihoods as a mechanism for improving reasoning performance, but crucially, our technique is \textit{training-free}.

\paragraph{Autoregressive MCMC sampling with LLMs.} Prior works have explored integrating classic MCMC techniques with autoregressive sampling. Many settings including red-teaming, prompt-engineering, and personalized generation can be framed as targeting sampling from the base LLM distribution but \textit{tilted} towards an external reward function. \cite{zhao2024probabilisticinference} proposes learning intermediate value functions that are used in a \textit{Sequential Monte Carlo} (SMC) framework \citep{chopin2004cltsmc}, where multiple candidate sequences are maintained and updated according to their expected future reward. Similarly, \cite{faria2024quest} proposes a \textit{Metropolis-Hastings} (MH) algorithm, which instead of maintaining multiple candidates performs iterative resampling, again updating according to expected reward. Methodologically, our sampling algorithm is most similar to this latter work, but the crucial difference is that our target sampling distribution is completely specified by the base LLM, \textit{avoiding the need for an external reward}.

\paragraph{Annealed sampling for diffusion.} In the statistical physics and Monte Carlo literature, sampling from $p^{\alpha}$ is known as sampling from an \textit{annealed}, or \textit{tempered}, distribution \citep{neal1998annealedimportance} and has inspired a new wave of interest within the diffusion community. Indeed, in traditional MCMC sampling, annealing is used as a way to avoid mode-collapse during sampling and more accurately sample from complex multimodal distributions \citep{latuszynski2025mcmcmultimodal}. This has re-emerged as inference-time sampling methods for diffusion that aim to steer a pretrained model towards ``tilted distributions'' \citep{du2023reduce, kim2025testtimealignment, karan2025reguidance, wang2025inference, kong2025diffusionconstrainedopt, zhang2025inference}. Where traditional RL techniques exhibit mode collapse, applications in the physical sciences \citep{sambridge2014paralleltempering} require multimodal sampling. To this end, works such as \cite{du2023reduce, wang2025inference, kim2025testtimealignment} construct sequences of annealed distributions to ease the transition from base diffusion distribution to tilted distribution. Other works \citep{skreta2025feynmankacorrectors, xu2025temporalscorerescaling} intentionally target sampling from $p^{\alpha}$ for $\alpha > 1$ as a means of generating higher quality samples from the base diffusion model, which is particularly popular for generating more designable proteins  \citep{geffner2025proteina}.

\section{Preliminaries}
\looseness=-1
Let $\mathcal{X}$ be a finite vocabulary of tokens, and let $\mathcal{X}^T$ denote the set of finite sequences of tokens $x_{0:T} = (x_0, x_1, \dots, x_T)$, where $x_i \in \mathcal{X}$ for all $i$ and $T \in \mathbb{Z}_{\geq 0}$ is some nonnegative integer. For convenience, for a given $t$, let $x_{<t} = (x_0, \dots, x_{t-1})$ and $x_{>t} = (x_{t+1}, \dots, x_{T})$, with similar definitions for $x_{\leq t}$ and $x_{\geq t}$. In general, $\mathbf{x}$ refers to a token sequence $x_{0:T}$, where $T$ is implicitly given.

Then an LLM defines a distribution $p$ over token sequences $\mathcal{X}^T$ by autoregressively learning the conditional token distributions $p(x_t | x_{<t})$ for all $t$, giving the \textit{joint distribution} via the identity
\begin{equation}\label{eq:joint}
    p(x_{0:T}) = \prod_{t=0}^T p(x_t | x_{<t}).
\end{equation}

To sample a sequence from $p$, we simply sample from the LLM token by token using the conditional distributions, which by (\ref{eq:joint}) directly samples from the joint distribution. 

\newpage

\section{MCMC Sampling for Power Distributions}

\begin{wrapfigure}{r}{0.39\linewidth}
  \centering
  \includegraphics[width=\linewidth]{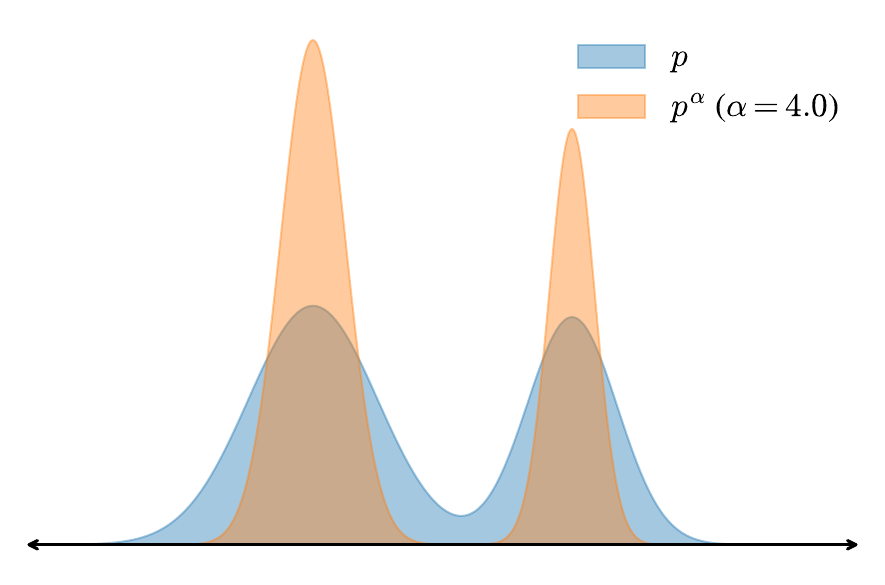}
  \captionsetup{font=small}
  \caption{\textbf{A toy example of distribution sharpening.} Here $p$ is a mixture of Gaussians, which we plot against $p^{\alpha}$ ($\alpha = 4.0$).}
  \label{fig:toy}
  \vspace{-15pt}
\end{wrapfigure}

In this section, we introduce our sampling algorithm for base models. Our core intuition is derived from the notion of distribution sharpening posed in Section \ref{sec:intro}. \textit{Sharpening} a reference distribution refers to reweighting the distribution so that high likelihood regions are further upweighted while low likelihood regions are downweighted, biasing samples heavily towards higher likelihoods under the reference. Then if RL posttrained models really are just sharpened versions of the base model, we should be able to explicitly specify a target sampling distribution that achieves the same effect.

We organize this section as follows. Section \ref{subsec:pow} presents this target sharpened distribution and provides some mathematical motivation for why its samples are amenable for reasoning tasks. Section \ref{subsec:mh} introduces a general class of Markov chain Monte Carlo (MCMC) algorithms aimed at actually sampling from this target distribution, and finally, Section \ref{subsec:samp} details our specific implementation for LLMs.

\subsection{Reasoning with Power Distributions}\label{subsec:pow}

One natural way to sharpen a distribution $p$ is to sample from the \textit{power distribution} $p^{\alpha}$. Since 
\begin{equation}
    p(\mathbf{x}) > p(\mathbf{x'}) \implies \frac{p(\mathbf{x})^{\alpha}}{p(\mathbf{x'})^{\alpha}} > \frac{p(\mathbf{x})}{p(\mathbf{x'})} \qquad (\alpha \in [1, \infty]),
\end{equation}
it follows that exponentiating $p$ \textit{increases} the relative weight on higher likelihood sequences ($\mathbf{x}$) while \textit{decreasing} the relative weight on lower likelihood ones ($\mathbf{x'}$) (see Figure \ref{fig:toy} for a visualization).

A related but well-known sharpening strategy is  \textit{low-temperature sampling} \citep{wang2020contextualtemperature}, which exponentiates the conditional next-token distributions at each step:
\begin{equation}\label{eq:temp}
     p_{\text{temp}}(x_t | x_0 \dots x_{t-1}) = \frac{p(x_t | x_{t-1}\dots x_0)^{\alpha}}{\sum_{x_t' \in \mathcal{X}} p(x_t' | x_{t-1}\dots x_0)^{\alpha}},
\end{equation}
where the \textit{temperature} is $\tau = 1/\alpha$. A common misconception is that sampling with (\ref{eq:temp}) over $T$ tokens is equivalent to sampling from $p^{\alpha}$; however, this is false in a subtle yet crucial way, as we illuminate in the following. 


\begin{proposition} Low-temperature sampling does not sample from the power distribution $p^{\alpha}$.
\end{proposition}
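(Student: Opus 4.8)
The plan is to exhibit an explicit counterexample: a two-token sequence over a tiny vocabulary where the product of the normalized low-temperature conditionals differs from the normalized power of the joint. This suffices, since a single instance where $p_{\text{temp}}(x_{0:T}) \neq p^{\alpha}(x_{0:T})$ (after normalization) refutes the claimed equality.

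First I would fix $T = 1$, $\mathcal{X} = \{a, b\}$, and specify the base model $p$ by its root conditional $p(x_0)$ and the two branch conditionals $p(x_1 \mid x_0 = a)$ and $p(x_1 \mid x_0 = b)$. The joint is $p(x_0 x_1) = p(x_0)\,p(x_1 \mid x_0)$, so the power distribution is $p^{\alpha}(x_0 x_1) = p(x_0 x_1)^{\alpha} / Z$ with $Z = \sum_{x_0, x_1} p(x_0 x_1)^{\alpha}$. On the other hand, low-temperature sampling first renormalizes $p(x_0)^{\alpha}$ over the root, and then independently renormalizes each branch $p(x_1 \mid x_0)^{\alpha}$ over $\{a,b\}$; call the branch normalizers $Z_a$ and $Z_b$. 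Then
\begin{equation}
p_{\text{temp}}(x_0 x_1) = \frac{p(x_0)^{\alpha}}{\sum_{x_0'} p(x_0')^{\alpha}} \cdot \frac{p(x_1 \mid x_0)^{\alpha}}{Z_{x_0}},
\end{equation}
so $p_{\text{temp}}(x_0 x_1) = p^{\alpha}(x_0 x_1) \cdot \frac{Z}{Z_{x_0} \sum_{x_0'} p(x_0')^{\alpha}}$. The two distributions coincide only if the correction factor $Z / (Z_{x_0} \sum_{x_0'} p(x_0')^{\alpha})$ is independent of $x_0$, i.e. only if $Z_a = Z_b$. So I would choose the branch conditionals to be asymmetric — for instance $p(\cdot \mid a)$ close to uniform and $p(\cdot \mid b)$ highly peaked — which makes $Z_a \neq Z_b$ for any $\alpha > 1$, and hence the conditional split across branches is distorted relative to $p^{\alpha}$. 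Concretely one can take $p(x_0 = a) = p(x_0 = b) = 1/2$, $p(x_1 = a \mid x_0 = a) = 1/2$, and $p(x_1 = a \mid x_0 = b) = 9/10$, then compute both distributions at, say, $\alpha = 2$ and observe they differ.

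There is essentially no obstacle here beyond choosing the numbers cleanly; the only thing to be slightly careful about is making the argument structural rather than purely numerical, so the mechanism is transparent. The key conceptual point to emphasize in the writeup is that low-temperature sampling normalizes \emph{locally} at each node of the generation tree, whereas the power distribution normalizes \emph{globally} over complete sequences — and these agree iff every subtree's partition function $\sum p(\cdot \mid x_{<t})^{\alpha}$ is identical across branches, which generically fails. It is also worth remarking (optionally) that the two do coincide in the degenerate case $\alpha = 1$, and that the discrepancy compounds multiplicatively over the $T$ generation steps, so the gap is not a boundary artifact but grows with sequence length. I would present the $T=1$ example as the proof and add this one-line structural remark for intuition.
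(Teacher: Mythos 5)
Your proof is correct, and it takes a different (and arguably more watertight) route than the paper's. The paper's proof works at the level of one-step conditionals: it writes $p_{\text{pow}}(x_t \mid x_{<t}) \propto \sum_{x_{>t}} p(\mathbf{x})^{\alpha}$ (a sum of powers) and $p_{\text{temp}}(x_t \mid x_{<t}) \propto \bigl(\sum_{x_{>t}} p(\mathbf{x})\bigr)^{\alpha}$ (a power of a sum), and then asserts that since these expressions differ, the joints must differ. That final step is informal — the two formulas could still coincide for degenerate $p$ (e.g., a single future path, or all futures of equal likelihood) — and the paper only closes the gap afterward via an illustrative example. Your argument instead compares the joint distributions directly: you factor out the ratio $p_{\text{temp}}(x_0 x_1) / p^{\alpha}(x_0 x_1) = Z / \bigl(Z_{x_0} \sum_{x_0'} p(x_0')^{\alpha}\bigr)$, observe it is non-constant exactly when the branch partition functions $Z_{x_0}$ differ, and exhibit an explicit $p$ and $\alpha$ for which $Z_a \neq Z_b$. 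This is a complete refutation-by-counterexample, and your structural remark about local-per-node versus global-over-sequences normalization is precisely the mechanism the paper is gesturing at with ``exponent of sums vs.\ sum of exponents.'' What the paper's framing buys is a cleaner segue into the reasoning-relevant Observation that $p^{\alpha}$ favors few-but-high-likelihood futures — the sum-of-powers form makes that intuition immediate in a way your $Z_{x_0}$ ratio does not — but as a proof of the stated Proposition, yours is the more self-contained one.
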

\begin{proof}
    We show that the associated conditional next-token distributions are distinct at each timestep $t$. The conditional distribution on $x_t$ for $p^{\alpha}$  is given by
\begin{equation}
    p_{\text{pow}}(x_t | x_0 \dots x_{t-1}) = \frac{\sum_{x_{>t}} p(x_0, \dots, x_t, \dots, x_T)^{\alpha}}{\sum_{x_{\geq t}} p(x_0, \dots, x_t, \dots, x_T)^{\alpha}}.
\end{equation}

Using Bayes rule
\begin{equation}
    p(x_t | x_{t-1} \dots x_0) = \frac{p(x_0, \dots, x_{t})}{p(x_0, \dots, x_{t-1})} = \frac{\sum_{x_{>t}} p(x_0, \dots, x_t, \dots, x_T)}{\sum_{x_{\geq t}} p(x_0, \dots, x_t, \dots, x_T)},
\end{equation}
we can rewrite the low-temperature marginal (\ref{eq:temp}) as 
\begin{equation}
    p_{\text{temp}}(x_t | x_0 \dots x_{t-1}) = \frac{\left(\sum_{x_{>t}} p(x_0, \dots, x_t, \dots, x_T)\right)^{\alpha}}{\sum_{x_t'}\left(\sum_{x_{>t}} p(x_0, \dots, x_t, \dots, x_T)\right)^{\alpha}}.
\end{equation}

Ignoring normalizations for clarity, the relative weight on token $x_t$ for sampling from $p^{\alpha}$ is given by a \textit{sum of exponents}
\begin{equation}\label{eq:sumexp}
    p_{\text{pow}}(x_t | x_{<t}) \propto \sum_{x_{>t}} p(x_0, \dots, x_t, \dots, x_T)^{\alpha}.
\end{equation}
Meanwhile, the relative weight for low-temperature sampling is given by an \textit{exponent of sums}
\begin{equation}\label{eq:expsum}
    p_{\text{temp}}(x_t | x_{<t}) \propto \left(\sum_{x_{>t}} p(x_0, \dots, x_t, \dots, x_T)\right)^{\alpha}.
\end{equation}
Since the relative weights of next-token prediction are distinct for each sampling strategy, it follows that the joint distribution over seqeunces must also be distinct for each sampler. Hence, the distribution on sequences given by low-temperature sampling is \textit{not the same} as the one given by $p^{\alpha}$.
\end{proof}

One intuitive way to understand this difference is that low-temperature sampling does not account for how exponentiation sharpens the likelihoods of ``future paths'' at time step $t$, instead ``greedily'' averaging all these future likelihoods (\textit{exponent of sums} (\ref{eq:expsum})). On the other hand, sampling from $p^{\alpha}$ \textit{inherently accounts} for future completions as it exponentiates all future paths (\textit{sum of exponents} (\ref{eq:sumexp})) before computing the weights for next-token prediction. This has the following consequence:

\begin{observation}
    The power distribution upweights tokens with few but high likelihood future paths, while low-temperature sampling upweights tokens with several but low likelihood completions.
\end{observation}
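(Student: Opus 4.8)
The plan is to turn the informal dichotomy into a precise \emph{relative} comparison of the two next-token rules derived in the proof of the preceding proposition, isolating a single scalar — a ``future-path concentration'' factor — that is the only thing distinguishing them. First I would start from the \emph{sum of exponents} formula (\ref{eq:sumexp}) for $p_{\text{pow}}$ and the \emph{exponent of sums} formula (\ref{eq:expsum}) for $p_{\text{temp}}$, and rewrite the joint around timestep $t$ in conditional form: for any candidate token $x_t$,
\[
p(x_{0:T}) \;=\; p(x_{<t})\, p(x_t\mid x_{<t})\, p(x_{>t}\mid x_{\le t}),
\qquad
\textstyle\sum_{x_{>t}} p(x_{>t}\mid x_{\le t}) = 1 .
\]
Substituting this and cancelling the common factor $p(x_{<t})^\alpha$ gives
\[
p_{\text{pow}}(x_t\mid x_{<t}) \;\propto\; p(x_t\mid x_{<t})^\alpha\, C_\alpha(x_t),
\qquad
p_{\text{temp}}(x_t\mid x_{<t}) \;\propto\; p(x_t\mid x_{<t})^\alpha ,
\]
where $C_\alpha(x_t) := \sum_{x_{>t}} p(x_{>t}\mid x_{\le t})^\alpha$ measures how concentrated the distribution over future completions of $x_t$ is. So $p_{\text{pow}}$ is exactly $p_{\text{temp}}$ reweighted, token by token, by $C_\alpha$.

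Second I would analyze $C_\alpha$ on the probability simplex. Because $z\mapsto z^\alpha$ is convex for $\alpha\ge 1$ (strictly for $\alpha>1$), the functional $r\mapsto\sum_i r_i^\alpha$ is Schur-convex: it is maximized, with value $1$, at a point mass (a token whose future mass sits on one high-likelihood completion), minimized, with value $N^{1-\alpha}$, at the uniform distribution over $N$ completions (a token whose future mass is spread thinly over many low-likelihood completions), and in general nondecreasing as the future-path distribution becomes more concentrated. I would then state the comparison: fix a prefix $x_{<t}$ and two tokens $a,b$ with $p(a\mid x_{<t}) = p(b\mid x_{<t})$ — so $\sum_{x_{>t}} p(x_{0:T})$ is the same for both and $p_{\text{temp}}$ assigns them equal weight — but with the future-completion distribution of $a$ majorizing that of $b$, which is the precise rendering of ``$a$ has few but high-likelihood future paths and $b$ has several but low-likelihood ones.'' Then $C_\alpha(a)\ge C_\alpha(b)$ (strictly, when $\alpha>1$ and the majorization is strict), hence $p_{\text{pow}}$ strictly upweights $a$ over $b$; equivalently, relative to $p_{\text{pow}}$, low-temperature sampling upweights $b$. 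These are the two halves of the observation.

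Finally I would note that the ``counting'' reading of the statement is the same phenomenon: if $a$'s future mass is (near-)uniform over $N_a$ completions and $b$'s over $N_b > N_a$ completions, then $C_\alpha(a) \approx N_a^{1-\alpha} > N_b^{1-\alpha} \approx C_\alpha(b)$, so $p_{\text{pow}}$ again tilts toward the token with fewer but individually higher-likelihood continuations, while $p_{\text{temp}}$ — which only sees the (equal) total mass — does not. I expect the main obstacle to be formalization rather than computation: once the conditional factorization is in place the rest is convexity of $z^\alpha$ plus standard majorization facts, so the real work is choosing hypotheses (equal next-token probability together with a majorization/uniformity condition on the future-path distributions) under which the clean dichotomy holds verbatim, and being careful throughout that the claim is comparative — both samplers carry the factor $p(x_t\mid x_{<t})^\alpha$, and it is solely the extra $C_\alpha$ factor, present in $p_{\text{pow}}$ and absent from $p_{\text{temp}}$, that produces the stated bias.
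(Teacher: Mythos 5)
Your proof is correct, and it takes a genuinely cleaner and more general route than the paper's. The paper (Appendix A.1) formalizes the observation by defining ``positive'' and ``negative'' pivotal tokens as the two extreme distributions over future completions---a point mass on a single future versus a uniform spread over $N$ futures---and directly computing each sampler's marginal weight in both cases; the key hypothesis there, $\varepsilon'/N^{1-1/\alpha} < \varepsilon < \varepsilon'$, deliberately lets the negative token carry the larger marginal weight $\varepsilon' > \varepsilon$, so that low-temperature sampling strictly favors it while power sampling strictly flips the ranking. You instead factor $p(x_{<t})^\alpha\,p(x_t\mid x_{<t})^\alpha$ out of both samplers' numerators and isolate the concentration functional $C_\alpha(x_t) = \sum_{x_{>t}} p(x_{>t}\mid x_{\le t})^\alpha$ as the one multiplicative factor present in $p_{\text{pow}}$ and absent from $p_{\text{temp}}$, then invoke Schur-convexity of $r \mapsto \sum_i r_i^\alpha$. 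That buys generality and transparency: the paper's two extremes are recovered exactly as $C_\alpha = 1$ and $C_\alpha = N^{1-\alpha}$, and the bias follows for \emph{any} pair of majorization-ordered future-completion distributions, not just those extremes. What it gives up---and you flag this yourself---is that under the hypothesis $p(a\mid x_{<t}) = p(b\mid x_{<t})$, $p_{\text{temp}}$ only ties the two tokens rather than strictly preferring the diffuse one, so the second half of the observation holds only relative to $p_{\text{pow}}$; the paper's asymmetric condition $\varepsilon < \varepsilon'$ is what converts that tie into a strict preference. You could recover the paper's stronger form within your framework by allowing $p(a\mid x_{<t}) < p(b\mid x_{<t})$ while requiring the gap $C_\alpha(a)/C_\alpha(b)$ to exceed $\bigl(p(b\mid x_{<t})/p(a\mid x_{<t})\bigr)^\alpha$, which specializes exactly to the paper's inequality when $a$ is a point mass and $b$ is uniform over $N$.
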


\begin{example}
    {\rm We can observe this phenomenon with a simple example. Let us consider the token vocabulary $\mathcal{X} = \{a, b\}$ and restrict our attention to two-token sequences $(x_0, x_1)$: $aa, ab, ba, bb$. Let \[
        p(aa) = 0.00, \qquad p(ab) = 0.40, \qquad p(ba) = 0.25, \qquad p(bb) = 0.25,
    \] so that \[
        p(x_0 = a) = 0.40, \qquad p(x_0 = b) = 0.50. \qquad
    \] Let $\alpha = 2.0$. Under $p^{\alpha}$, we have \[p_{\text{pow}}(x_0 = a) \propto 0.00^2 + 0.40^2 = 0.160, \qquad p_{\text{pow}}(x_0 = b) \propto 0.25^2 + 0.25^2 = 0.125,\] so $p^{\alpha}$ prefers sampling $a$ over $b$. Under low-temperature sampling, \[p_{\text{temp}}(x_0 = a) \propto (0.00 + 0.40)^2 = 0.160, \qquad p_{\text{temp}}(x_0 = b) \propto (0.25 + 0.25)^2 = 0.250,\] preferring sampling $b$ over $a$. If $p^{\alpha}$ samples $x_0=a$, there is only one future path with likelihood $0.40$. If $p_{\text{temp}}$ samples $x_0=b$, there are two future paths $ba, bb$, but either choice has likelihood $0.25$. 
    
    In other words, even though $a$ has lower conditional likelihood under both $p$ and $p_{\text{temp}}$, $p^{\alpha}$ upweights $a$ and samples the highest likelihood two-token sequence. $b$ has many future paths contributing to a higher likelihood under $p$ and $p_{\text{temp}}$, but leads to low likelihood sequences. We provide a stronger formalization of this phenomenon in Appendix \ref{apx:proof}}.
\end{example}

Thus, sampling from $p^{\alpha}$ encourages sampling tokens which have fewer but higher likelihood ``future paths", as opposed to tokens with several lower likelihood completions. This type of behavior is immensely valuable for reasoning tasks. For example, choosing ``wrong'' tokens that have high average likelihoods but trap outputs in low likelihood individual futures are examples of \textit{critical windows} or \textit{pivotal tokens} \citep{li2025blinkofaneyetheory, abdin2024phi4}, a phenomenon where a few tokens are highly influential in the correctness of language model outputs. In fact, sharp critical windows have been shown to correlate strongly with reasoning failures \citep{li2025blinkofaneyetheory}. Instead, embedded in sampling from the power distribution is an implicit bias towards planning for future high likelihood tokens.

\begin{figure}[t]
  \centering
  \includegraphics[width=\linewidth]{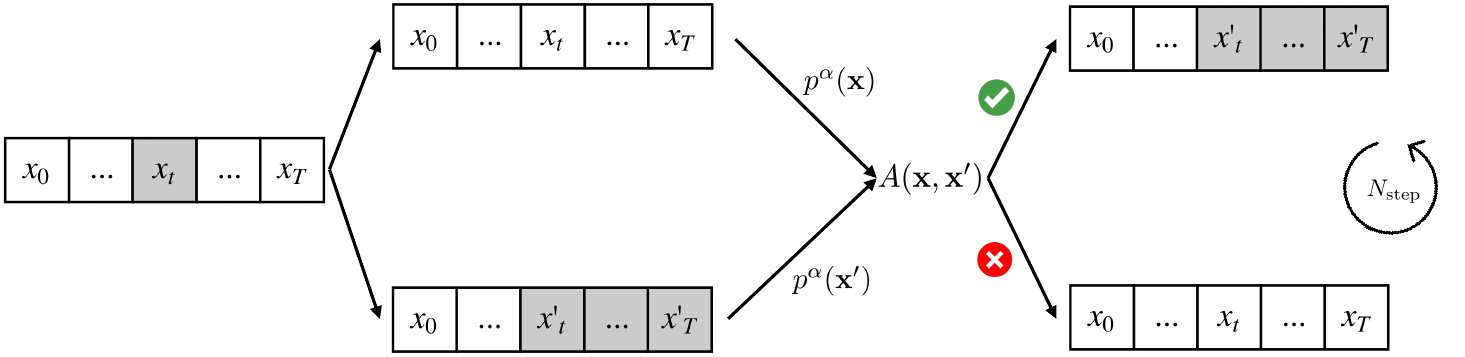}
  \captionsetup{font=small}
  \caption{\textbf{Illustrating Metropolis-Hastings with random resampling.} A random index $t$ is selected and a new candidate is generated by resampling. Based on the relative likelihoods, the candidate is accepted or rejected, and the process repeats.}
  \label{fig:step}
\end{figure}

\subsection{The Metropolis-Hastings Algorithm}\label{subsec:mh}
Now that we have seen how sampling from $p^{\alpha}$ can in theory assist the underlying LLM's ability to reason, our aim now turns towards proposing an algorithm to accurately sample from it. Given an LLM $p$, we have access to the values $p^{\alpha}$ over any sequence length; however, these values are \textit{unnormalized}. Direct sampling from the true probabilities requires normalizing over all sequences $(x_0, \dots, x_T) \in \mathcal{X}^T$, which is computationally intractable. 

To get around this, we invoke a Markov Chain Monte Carlo (MCMC) algorithm known as Metropolis-Hastings (MH) \citep{metropolis1953equation}, which targets exactly what we want: approximate sampling from an unnormalized probability distribution. The MH algorithm constructs a Markov chain of sample sequences $(\mathbf{x}^0, \mathbf{x}^1, \dots, \mathbf{x}^n)$ using an arbitrary \textit{proposal distribution} $q(\mathbf{x}|\mathbf{x}^i)$ to select the next candidate $\mathbf{x}^{i+1}$. With probability 
\begin{equation}\label{eq:accept}
    A(\mathbf{x}, \mathbf{x}^i) = \text{min} \left\lbrace 1, \frac{p^{\alpha}(\mathbf{x}) \cdot q(\mathbf{x}^i | \mathbf{x})}{p^{\alpha}(\mathbf{x}^i) \cdot q(\mathbf{x}|\mathbf{x}^i)}\right\rbrace,
\end{equation}
 
 candidate $\mathbf{x}$ is accepted as $\mathbf{x}^{i+1}$; otherwise, MH sets $\mathbf{x}^{i+1} = \mathbf{x}^{i}$. This algorithm is especially convenient as it only requires the relative weights given by $p^{\alpha}$ (as the normalization weights in $A$ cancel) and works with \textit{any} generic but tractable sampler $q$ with minimal restrictions. Remarkably, for large enough $n$, this process converges to sampling from the \textit{target distribution} $p^{\alpha}$ under the following (quite minimal) conditions on the proposal distribution \citep{neal1993probabilistic}:

 \begin{definition}
    {\rm The proposal distribution $q$ is \textit{irreducible} if for any set $X$ with nonzero mass under the target distribution $p^{\alpha}$, $q$ has nonzero probability of eventually sampling from $X$. The proposal is \textit{aperiodic} if the induced chain of samples does not return to the same sample after a fixed interval number of steps.}
    \vspace{-2pt}
\end{definition}

Thus, we must simply ensure that our proposal distribution satisfies irreducibility and aperiodicity, and Metropolis-Hastings takes care of the rest. On a practical level, we would also like both $q(\mathbf{x} | \mathbf{x}^i)$ and its reverse $q(\mathbf{x}^i | \mathbf{x})$ to be easily computable.

Consider the following family of \textit{random resampling} proposal distributions (see Figure \ref{fig:step}). Let $p_{\text{prop}}$ be a proposal LLM. With uniform probability $\frac{1}{T}$, select a random $t \in [1, T]$ and \textit{resample the sequence}  starting at index $t$ using $p_{\text{prop}}$. Then the transition likelihood $q(\mathbf{x} | \mathbf{x}^i)$ is simply the likelihood of the resampling. Note that at each candidate selection step, we have a nonzero probability of transitioning between any two sequences $\mathbf{x}, \mathbf{x'} \in \mathcal{X}$, since with some probability we can always resample as early as the beginning of $\mathbf{x}$. This ensures our proposal distribution is both irreducible and aperiodic. Moreover, $q(\mathbf{x}^i | \mathbf{x})$ is easy to calculate by symmetry, since we can treat $\mathbf{x}^i$ as a resampled version of $\mathbf{x}$.

With the flexibility endowed by Metropolis-Hastings, we can choose the proposal LLM $p_{\text{prop}}$ to be any LLM with any sampling strategy (e.g., low-temperature sampling). 


\subsection{Power Sampling with Autoregressive MCMC}\label{subsec:samp}

A direct implementation of Metropolis-Hastings for LLMs would involve initializing with a sampled token sequence of length $T$, subsequently generating new candidates of length $T$ with (\ref{eq:accept}) over many, many iterations. This process  is computationally expensive, however, due to the repeated, full sequence inference calls to the LLM.

In fact, the main downside to MCMC algorithms in practice is the potential for an \textit{exponential mixing time} \citep{gheissari2017exponentially}, where a poor choice of initialization or proposal distribution can result in an exponentially large number of samples required before convergence to the target distribution. This problem is exacerbated if the sample space has \textit{high dimensionality} \citep{bandeira2022freeenergybarriers,schmidlerwoodard2013lowerbounds}, which is precisely exhibited by the sequence space of tokens $\mathcal{X}^T$, especially for long sequences/large values of $T$. 

To remedy this, we propose an algorithm that leverages the sequential structure of autoregressive sampling. We define a series of intermediate distributions which we  progressively sample from, until converging to the target distribution $p^{\alpha}$. In particular, samples from one intermediate distribution initiate a Metropolis-Hastings process for the next, helping avoid pathological initializations.

Fix block size $B$ and proposal LLM $p_{\text{prop}}$, and consider the sequence of (unnormalized) distributions
\begin{equation}
    \emptyset \longrightarrow p(x_0, \dots, x_B)^{\alpha} \longrightarrow p(x_0, \dots, x_{2B})^{\alpha} \longrightarrow \dots \longrightarrow p(x_0, \dots, x_T)^{\alpha},
\end{equation}
where $p(x_0, \dots, x_{kB})$ denotes the joint distribution over token sequences of length $kB$, for any $k$. For convenience, let $\pi_k$ denote the distribution given by \begin{equation}
    \pi_k(x_{0:kB}) \propto p(x_{0:kB})^{\alpha}.
\end{equation}
Suppose we have a sample from $\pi_k$. To obtain a sample from $\pi_{k+1}$, we initialize a Metropolis-Hastings process by sampling the next $B$ tokens $x_{kB+1:(k+1)B}$ with $p_{\text{prop}}$. We subsequently run the MCMC sampling procedure for $N_{\text{MCMC}}$ steps, using the \textit{random resampling} proposal distribution $q$ from the previous section. The full details are presented in Algorithm \ref{alg:samp}.

\begin{figure}[t]
\begin{minipage}{\linewidth}
\begin{algorithm2e}[H]
\DontPrintSemicolon
\caption{Power Sampling for Autoregressive Models}
\label{alg:samp}
\SetKwInOut{Input}{Input}
\SetKwInOut{Output}{Output}
\SetKwInOut{Params}{Hyperparams}

\Input{base $p$; proposal $p_{\mathrm{prop}}$; power $\alpha$; length $T$}
\BlankLine
\Params{block size $B$; MCMC steps $N_{\mathrm{MCMC}}$}
\BlankLine
\Output{$(x_0,\dots,x_T) \sim p^\alpha$}

\BlankLine
\textbf{Notation:} Define the unnormalized intermediate target
\[{\pi}_k(x_{0:{kB}}) \;\propto\; p(x_{0:kB})^{\alpha}.
\]

\BlankLine

\For{$k \gets 0$ \KwTo $\lceil \frac{T}{B}\rceil-1$}{
  
  Given prefix $x_{0:kB}$, we wish to sample from $\pi_{k+1}$. Construct initialization ${\mathbf{x}}^{0}$ by extending autoregressively with $p_{\text{prop}}$:
  \[
  x^{(0)}_t \sim p_{\text{prop}}\big(x_t \mid x_{<t}\big), 
    \qquad\text{for  } kB+1 \leq t \leq (k+1)B.
  \]
  Set the current state $\mathbf{x} \gets \mathbf{x}^{0}$. \;

  \BlankLine
  \For{$n \gets 1$ \KwTo $N_{\mathrm{MCMC}}$}{
    Sample an index $m \in \{1,\dots, (k+1)B\}$  uniformly. \;
    \BlankLine
    Construct proposal sequence $\mathbf{x}'$ with prefix $x_{0:m-1}$ and resampled completion:
    \[
    x'_{t} \sim p_{\text{prop}}\big(x_t \mid x_{<t}\big), 
    \qquad\text{for } m \leq t \leq (k+1)B.
    \]

    Compute acceptance ratio (\ref{eq:accept})
    \[
      A(\mathbf{x'}, \mathbf{x}) \;\gets\; \min\Bigg\{1,\ 
      \frac{{\pi}_k(\mathbf{x'})}{{\pi}_k(\mathbf{x})}
      \cdot 
      \frac{p_{\text{prop}}(\mathbf{x}\mid \mathbf{x'})}{p_{\text{prop}}(\mathbf{x'}\mid \mathbf{x})}
      \Bigg\}.
    \]
    Draw $u \sim \mathrm{Uniform}(0,1)$; \;
    \lIf{$u \le A(\mathbf{x'}, \mathbf{x})$}{\textbf{accept} and set $\mathbf{x} \gets \mathbf{x'}$}
  }
  Set $x_{0:(k+1)B} \gets \mathbf{x}$ to fix the new prefix sequence for the next stage. \;
}
\Return{$x_{0:T}$} \;
\end{algorithm2e}
\end{minipage}
\end{figure}

Note that Algorithm \ref{alg:samp} is \textit{single-shot}: even though multiple inference calls are made, the decision to accept vs. reject new tokens is made purely by base model likelihoods to simulate sampling \textit{a single sequence} from $p^{\alpha}$. We can interpret this as a new axis for \textit{inference-time scaling}, as we expend additional compute during sampling to obtain a higher quality/likelihood sample.

To quantify the scaling, we can estimate the average number of tokens generated by Algorithm \ref{alg:samp}. Note that each candidate generation step when sampling from $\pi_k(x_{0:kB}$ resamples an average of $\frac{kB}{2}$ tokens, $N_{\text{MCMC}}$ times. Summing over all $k$, the expected number of tokens generated is 
\begin{equation}\label{eq:tok}\mathbb{E}_{\text{tokens}} = N_{\text{MCMC}}\sum_{k=1}^{\lceil T/B \rceil} \frac{kB}{2} \approx \frac{N_{\text{MCMC}}T^2}{4B}.\end{equation}
The key tradeoff here is between the block size $B$ and number of MCMC steps $N_{\text{MCMC}}$. A larger $B$ requires larger ``jumps'' between intermediate distributions, requiring a larger $N_{\text{MCMC}}$ to adequately transition. In Section \ref{sec:exp}, we empirically find a value for $B$ that makes Algorithm \ref{alg:samp} performant for relatively small values of $N_{\text{MCMC}}$.

\section{Experiments}\label{sec:exp}

\subsection{Experimental Setup}

\paragraph{Evaluation.} We use a standard suite of reasoning benchmarks ranging across mathematics, coding, and STEM (MATH500, HumanEval, GPQA), along with a non-verifiable benchmark (AlpacaEval 2.0) evaluating general helpfulness. We evaluate all of our methods and baselines \textit{single-shot}; i.e., on one final response string. 

\begin{itemize}[leftmargin=*, , nosep]
    \item {\textbf{MATH500}}: The {MATH} dataset \citep{lightman2024lets} consists of competition math problems spanning seven categories including geometry, number theory, and precalculus. There are 12500 problems total, with 7500 training problems and 5000 test problems. {MATH500} is a specific randomly chosen subset of the test set standardized by OpenAI.

    \item \textbf{HumanEval}: HumanEval is a set of 164 handwritten programming problems covering algorihtms, reasoning, mathematics, and language comprehension \citep{chen2021evaluatingllmcode}. Each problem has an average of 7.7 associated unit tests, where solving the problem corresponds to passing all unit tests.

    \item {\textbf{GPQA}}: GPQA \citep{rein2024gpqa} is a dataset of multiple-choice science questions (physics, chemistry, and biology) which require advanced reasoning skills to solve. We use  subset GPQA Diamond for evaluation, which consists of 198 questions which represent the highest quality subset of the GPQA dataset.

    \item \textbf{AlpacaEval 2.0}: The AlpacaEval dataset is a collection of 805 prompts \citep{dubois2024lengthcontrolledalpacaeval} that gauge general helpfulness with questions asking e.g., for movie reviews, recommendations, and reading emails. The model responses are graded by an automated LLM judge (GPT-4-turbo), which determines a preference for the model responses over those from a baseline (also GPT-4-turbo). The resulting score is a win rate of model responses normalized for the length of the model response.
\end{itemize}

\paragraph{Models.} To demonstrate the efficacy of our sampling algorithm, we use the base models Qwen2.5-Math-7B, Qwen2.5-7B, and Phi-3.5-mini-instruct. For our RL baselines, we use the implementation of GRPO in \cite{shao2025spuriousrewards}, which posttrains these models on the MATH training split. For both the Qwen2.5 models, we use the default hyperparameters used to benchmark their performance in \cite{shao2025spuriousrewards}. For the Phi-3.5 model, we use a set of hyperparameters selected from \cite{abdin2024phi4} that avoids training instabilities and converges to improvement over the base model over a large number of epochs. 

\paragraph{Sampling Algorithm.} For our implementation of power sampling (Algorithm \ref{alg:samp}), we set the maximum $T$ to be $T_{\text{max}} = 3072$ (termination can happen earlier with an EOS token) and block size $B = 3072/16 = 192$. Empirically, we find $\alpha = 4.0$ coupled with a proposal LLM $p_{\text{prop}}$ chosen as the base model with sampling temperature $1/\alpha$ to be  most performant for reasoning tasks. For AlpacaEval 2.0, we find that having a proposal distribution of higher temperature ($\tau = 0.5$) improves performance. 

\subsection{Results}
\begin{table}[t]
\centering
\small
\begin{tabular}{p{4cm}cccc}
\toprule
 & MATH500 & HumanEval & GPQA & AlpacaEval2.0 \\
\midrule

\multicolumn{5}{l}{\textbf{Qwen2.5-Math-7B}} \\
\hspace{0.5cm}Base     &  0.496    &    0.329  &   0.278   &  1.61    \\
\hspace{0.5cm}Low-temperature &  0.690    &     0.512 &  0.353  &   2.09   \\
\hspace{0.5cm}\textbf{Power Sampling (ours)}     & \textbf{0.748}     &     \underline{\textbf{0.573}} &  \textbf{{0.389}}     &   \underline{\textbf{2.88}}   \\
\cmidrule[\lightrulewidth](lr){1-5}
\hspace{0.5cm}GRPO (MATH)    &  \textbf{0.785}   &    \textbf{0.537}  &   \textbf{0.399}   &   \textbf{2.38}   \\
\midrule

\multicolumn{5}{l}{\textbf{Qwen2.5-7B}} \\
\hspace{0.5cm}Base     &   0.498   &     0.329 &   0.278   &   7.05   \\
\hspace{0.5cm}Low-temperature & 0.628     &     0.524 &   0.303   &   5.29   \\
\hspace{0.5cm}\textbf{Power Sampling (ours)}     &  \textbf{0.706}    &     \underline{\textbf{0.622}} &   \textbf{0.318}   &   \underline{\textbf{8.59}}   \\
\cmidrule[\lightrulewidth](lr){1-5}
\hspace{0.5cm}GRPO (MATH)    &  \textbf{0.740}    &   \textbf{0.561}   &    \textbf{0.354}  &   \textbf{7.62}   \\
\midrule

\multicolumn{5}{l}{\textbf{Phi-3.5-mini-instruct}} \\
\hspace{0.5cm}Base     & 0.400     &     0.213 &   0.273   &   14.82   \\
\hspace{0.5cm}Low-temperature & 0.478     &    0.585  &   0.293   &  \textbf{18.15}    \\
\hspace{0.5cm}\textbf{Power Sampling (ours)}     &  \underline{\textbf{0.508}}    &     \underline{\textbf{0.732}} &   \underline{\textbf{{0.364}}}  &  \underline{\textbf{17.65}}    \\
\cmidrule[\lightrulewidth](lr){1-5}
\hspace{0.5cm}GRPO (MATH)   & \textbf{0.406}      &   \textbf{0.134}   &   \textbf{0.359}   &   \textbf{{16.74}}   \\
\bottomrule
\end{tabular}
\vspace{5pt}
\caption{\textbf{Power sampling (ours) matches and even outperforms GRPO across model families and tasks.}  We benchmark the performance of our sampling algorithm on MATH500, HumanEval, GPQA, and AlpacaEval 2.0. We bold the scores of both our method and GRPO, and underline whenever our method outperforms GRPO. Across models, we see that power sampling is comparable to GRPO on in-domain reasoning (MATH500), and can outperform GRPO on out-of-domain tasks.}\label{tab:main}
\vspace{-5pt}
\end{table}

\paragraph{Main results.} We display our main results in Table \ref{tab:main}. Across base models of different families, our sampling algorithm achieves massive, near-universal boosts in single-shot accuracies and scores over different reasoning and evaluation tasks that reach, e.g., up to \textbf{+51.9\%} on HumanEval with Phi-3.5-mini and \textbf{+25.2\%} on MATH500 with Qwen2.5-Math. In particular, on MATH500, which is in-domain for RL-posttraining, power sampling achieves accuracies that are on par with those obtained by GRPO. Furthermore, on out-of-domain reasoning, our algorithm again matches GRPO on GPQA and actually \textit{outperforms} on HumanEval by up to \textbf{+59.8\%}. Similarly, power sampling consistently outperforms on the non-verifiable AlpacaEval 2.0, suggesting a \textit{generalizability} of our boosts to domains beyond verifiability. 

The surprising success of this fundamentally simple yet training-free sampling algorithm underscores the latent reasoning capabilities of existing base models.

\subsection{Analysis}
We analyze how the reasoning characteristics of power sampling relate to those of GRPO. We present an example in Table \ref{tab:phi}, with further examples in Appendix \ref{apx:more}. 

\paragraph{Reasoning trace likelihoods and confidences.} By design, power sampling targets sampling higher likelihood sequences from the base model. In Figure \ref{fig:conf}, the left graph plots a histogram of the output sequence log-likelihoods (averaged by length) of the base model, power sampling, and GRPO responses on MATH500, where likelihoods are taken relative to the Qwen2.5-Math-7B base model. Our method samples from higher likelihood regions of the base model, as intended, but still maintains noticeable spread. Meanwhile, GRPO samples are heavily concentrated at the highest likelihood peak. 

\begin{figure}[h!]
  \centering
  \includegraphics[width=\linewidth]{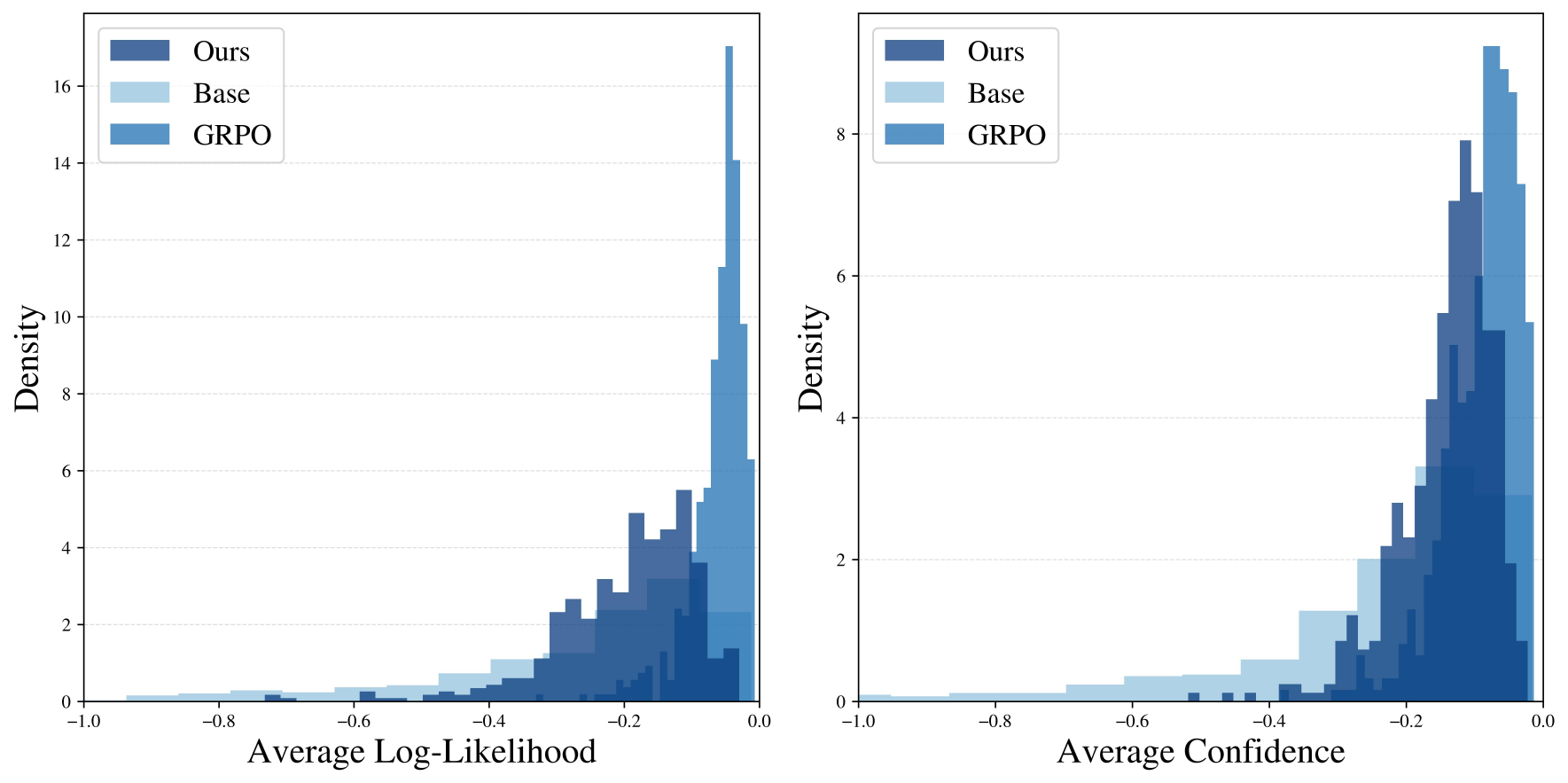}
  \vspace{-15pt}
  \captionsetup{font=small}
  \caption{\textbf{ Base model (Qwen2.5-Math-7B) likelihoods and confidences for MATH500 responses.} Left: We plot the log-likelihoods (relative to the base model) of original, power sampling, and GRPO responses over MATH500. Right: We do the same but for confidences relative to the base model. We observe that GRPO samples from the highest likelihood and confidence regions with power sampling close behind, which correlates with higher empirical accuracy.}
  \label{fig:conf}
  \vspace{-8pt}
\end{figure}

We also plot the base model \textit{confidence} of  MATH500 responses, defined to be the average negative entropy (\textit{uncertainty}) of the next-token distributions \citep{prabhudesai2025maximizingconfidence}:
\begin{equation}
    \text{Conf}(x_{0:T}) = \frac{1}{T+1}\sum_{t=0}^T\sum_{x \in \mathcal{X}}p(x | x_{<t}) \log{p(x | x_{<t}}).
\end{equation}The right plot of Figure \ref{fig:conf} demonstrates that our method's and GRPO responses sample from similarly high confidence regions from the base model, which again correspond to regions of higher likelihood and correct reasoning.  

\paragraph{Reasoning trace lengths.} Another defining characteristic of RL-posttraining is long-form reasoning \citep{guo2025deepseekr1}, where samples tend to exhibit longer responses. On MATH500, Qwen2.5-Math-7B averages a response length of \textbf{600} tokens, while GRPO averages \textbf{671} tokens. Surprisingly, power sampling achieves a similar average length of \textbf{679} tokens, \textit{without explicitly being encouraged to favor longer generations.}  This emerges naturally from the sampling procedure.

\paragraph{Diversity and pass@$k$ performance.} Again, notice the peaked and highly concentrated likelihoods/confidences of GRPO relative to the distributional spread of power sampling in Figure \ref{fig:conf}. This suggests GRPO exhibits a collapse in diversity while our sampler does not, aligning with the observation that RL-posttraining strongly sharpens the base model distribution at the expense of diversity \citep{song2025-outcomebasedexploration}. To quantify the comparative diversity of power sampling relative to GRPO, we can plot the pass@$k$ accuracy rate, where a question is solved if at least one of $k$ samples is accurate. Figure \ref{fig:pass} shows exactly this: unlike GRPO, whose pass@$k$ performance tapers off for large $k$, power sampling strongly outperforms for $k > 1$. Moreover, our performance curve supersedes that of the base model until finally converging in performance. In particular, we are able to achieve GRPO-level single-shot performance \textit{without compromising multi-shot performance} (see Appendix \ref{apx:passes} for other domains), addressing a long-standing downside to RL-posttraining.

\begin{figure}[h!]
    \centering
    \includegraphics[width=0.65\linewidth]{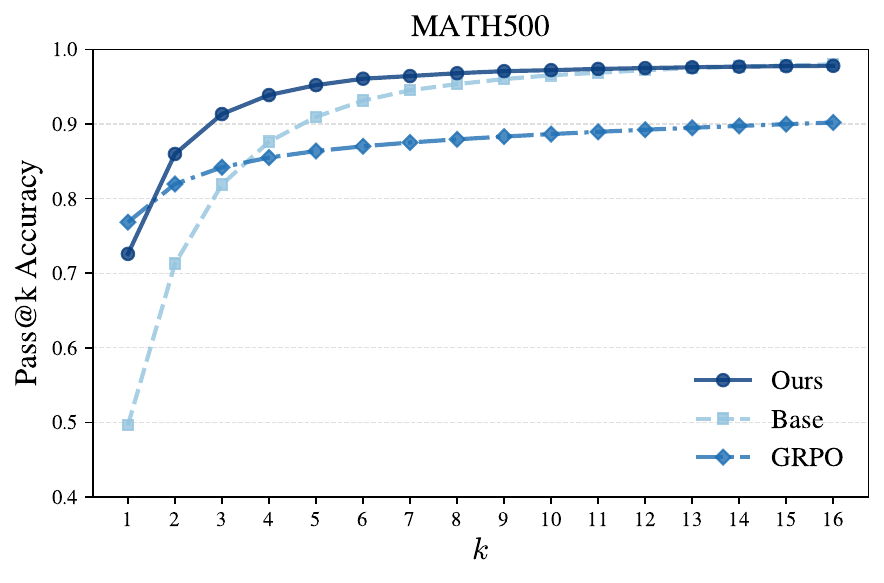}
    \captionsetup{font=small}
    \vspace{-10pt}
    \caption{\textbf{Pass@$k$ performance on MATH500}. We plot the pass@$k$ accuracy (correct if at least one of $k$ samples is accurate) of power sampling (ours) and RL (GRPO) relative to the base model (Qwen2.5-Math-7B). Our performance curve is strictly better than both GRPO and the base model, and our pass rate at high $k$ matches the base model, demonstrating sustained generation diversity.}
    \label{fig:pass}
\end{figure}

\newcolumntype{Z}[1]{>{\centering\arraybackslash}p{#1}}

\newcolumntype{Q}[1]{>{\raggedright\arraybackslash}p{#1}} 
\newcolumntype{C}[1]{>{\centering\arraybackslash}p{#1}}   

\newlength{\codecol}
\setlength{\codecol}{\dimexpr\textwidth - 1.1cm - 1.2cm - 6\tabcolsep\relax}
\newlength{\codeinner}
\setlength{\codeinner}{0.75\codecol} 

\lstdefinestyle{py}{
  language=Python,
  basicstyle=\ttfamily\scriptsize,
  keepspaces=true,
  showstringspaces=false,
  breaklines=true,
  breakatwhitespace=false,
  aboveskip=1pt,
  belowskip=1pt
}

\newlength{\fullspan}
\setlength{\fullspan}{\dimexpr\textwidth - 2\tabcolsep\relax}

\newsavebox{\mcmcbox}
\begin{lrbox}{\mcmcbox}
  \begin{minipage}{\codeinner}
\begin{lstlisting}[style=py]
return [s for s in strings 
        if s.startswith(prefix)]
\end{lstlisting}
  \end{minipage}
\end{lrbox}

\newsavebox{\grpobox}
\begin{lrbox}{\grpobox}
  \begin{minipage}{\codeinner}
\begin{lstlisting}[style=py]
return [string for string in strings if
        string.startswith(f'{prefix}'*2)]
\end{lstlisting}
  \end{minipage}
\end{lrbox}

\begin{table}[t]
\centering
\setlength{\tabcolsep}{6pt}
\renewcommand{\arraystretch}{1.15}
\begin{tabular}{Q{1.1cm} C{\codecol} Q{1.2cm}}
\toprule
\multicolumn{3}{c}{%
  \parbox{\fullspan}{\footnotesize{\textit{Filter an input list of strings only for ones that start with a given prefix.} (Phi-3.5-mini-instruct: HumanEval)}}%
} \\
\midrule
\footnotesize{Method} & \footnotesize{Response} & \footnotesize{Passed} \\
\midrule
\footnotesize{Ours} & \makebox[\codecol][c]{\usebox{\mcmcbox}} & \texttt{\scriptsize{true}} \\
\addlinespace[1pt]  
\footnotesize{GRPO} & \makebox[\codecol][c]{\usebox{\grpobox}} & \texttt{\scriptsize{false}} \\
\bottomrule
\end{tabular}
\vspace{5pt}
\caption{\textbf{Sample responses on HumanEval: Phi-3.5-mini-instruct.} We present an example where our method solves a simple coding question, but GRPO does not.}\label{tab:phi}
\vspace{-10pt}
\end{table}

\begin{figure}[h!]
  \centering
  \includegraphics[width=\linewidth]{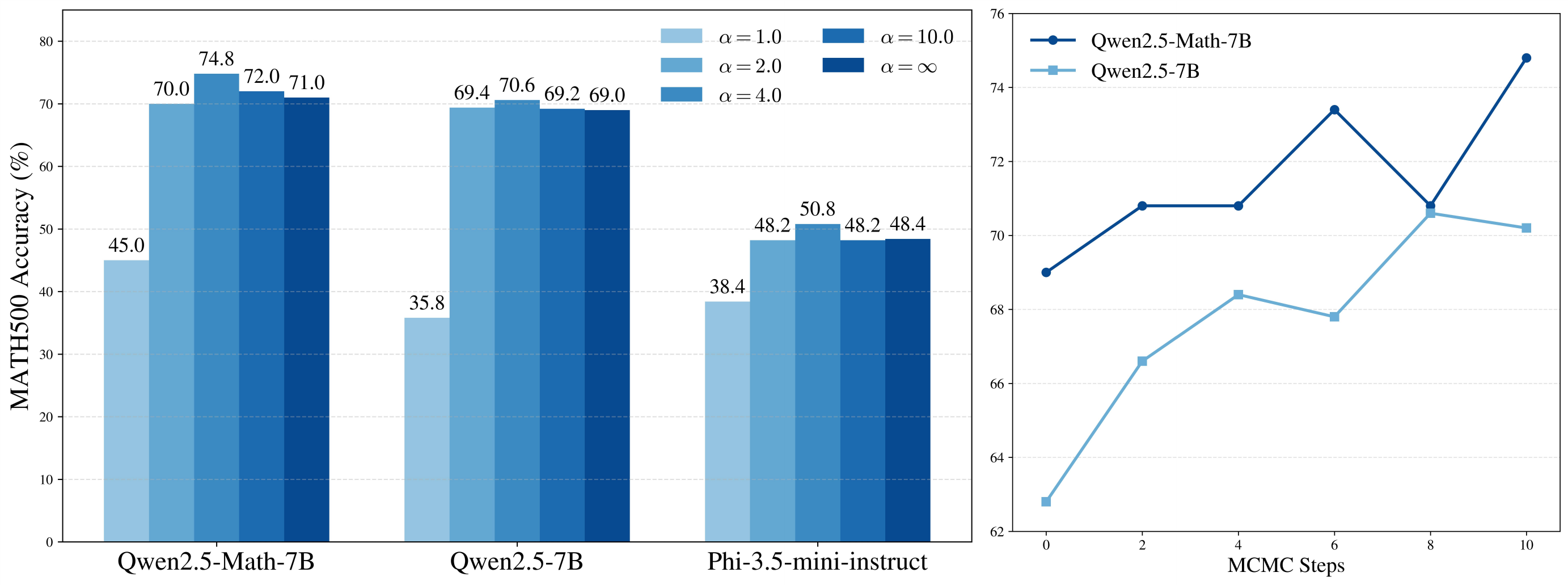}
  \captionsetup{font=small}
  \caption{\textbf{Effect of hyperparameters on power sampling.} Left: We plot MATH500 accuracy across model families for various values of $\alpha$. Right: We plot the increase in accuracy of power sampling on Qwen models as the number of MCMC steps increases. }
  \label{fig:hp}
\end{figure}

\paragraph{The effect of power distributions.}
The two most important hyperparameters for power sampling are the choice of $\alpha$ and the number of MCMC (resampling) steps during sequence generation $N_{\text{MCMC}}$. At the extremes, choosing $\alpha = 1.0$ samples from the base model directly, while taking $\alpha \to \infty$ has the effect of deterministically accepting any resampled sequence that strictly increases the likelihood. Of course, even though higher base model likelihoods correlate with better reasoning (Figure \ref{fig:conf}), directly optimizing for likelihood is not necessarily optimal for reasoning, suggesting an ideal intermediate value of $\alpha$.

In Figure \ref{fig:hp}, we display MATH500 accuracies across various values of $\alpha$ and find that an intermediate $\alpha = 4.0$ outperforms other values, as expected. Noticeably, the accuracies of power sampling remain relatively stable beyond $\alpha \geq 2.0$, suggesting that power sampling in practice is relatively robust to the choice of $\alpha$.

\paragraph{Test-time scaling with MCMC steps.} On the other hand, $N_{\text{MCMC}}$ toggles the inference-time compute expended by our algorithm, providing a natural axis for test-time scaling. In Section \ref{subsec:samp} we raised the notion of a \textit{mixing time}, or the number of MCMC steps required before adequately sampling from the target distribution. In our case, we expect that the fewer MCMC steps we take, the further our algorithm samples from the target $p^{\alpha}$.

\looseness=-1
We plot performance dependence on $N_{\text{MCMC}}$ in Figure \ref{fig:hp} and notice a steady increase in accuracy until $N_{\text{MCMC}} = 10$, beyond which accuracy remains roughly stable (not plotted). The accuracy difference from using fewer MCMC steps is noticeable but no more than $3$-$4\%$ between $N_{\text{MCMC}} = 2$
and $N_{\text{MCMC}} = 10$. However, the jump in accuracy by using at least two steps as opposed to none is substantial ($3$-$4$\%). 

We can even compute the total amount of tokens generated by our method relative to running GRPO. From (\ref{eq:tok}), our sampler generates $\frac{1}{4B} \cdot {N_{\text{MCMC}}T}$ times as many tokens as standard inference to generate a sequence of length $T$. Plugging in our experimental parameters $N_{\text{MCMC}} = 10$, $T = 679$ (our average output length for MATH500) and $B = 192$, running inference with power sampling incurs a multiplier of $\textbf{8.84} \times$ the number of tokens as running standard inference. Since GRPO generates multiple rollouts per example during training, \textit{our method incurs roughly the same inference cost as one epoch of GRPO training}, assuming 8 rollouts per sample with identical dataset sizes. Typically though, one GRPO epoch is still more expensive as it uses 16 rollouts and a training set that is larger than MATH500.

\section{Conclusion}

In this work, we present an algorithm that samples directly from a base model without any additional training or access to an external signal, achieving a single-shot reasoning performance that is on par with, and sometimes even better than, that of a state-of-the-art RL-posttraining algorithm. We use the discussion of RL distribution sharpening to motivate defining the power distribution as a valuable target distribution for reasoning. Although exact power distribution sampling is intractable, we employ classic MCMC techniques alongside the sequential structure of autoregressive generation to define our power sampling algorithm, which demonstrates strong empirical performance.

Our results suggest that base model capabilities are underutilized at sampling time and point towards a close relationship between high likelihood regions of the base model and strong reasoning capabilities. Employing additional compute at sampling-time with a stronger understanding of base model capabilites offers a promising direction for expanding the scope of reasoning beyond verifiability.

\section{Acknowledgments}
A.K. would like to thank  the Paul and Daisy Soros Foundation, NDSEG Fellowship, and Kempner Institute for their support.

%

{
    \bibliographystyle{unsrtnat}
    \bibliography{main}

\begin{thebibliography}{41}
\providecommand{\natexlab}[1]{#1}
\providecommand{\url}[1]{\texttt{#1}}
\expandafter\ifx\csname urlstyle\endcsname\relax
  \providecommand{\doi}[1]{doi: #1}\else
  \providecommand{\doi}{doi: \begingroup \urlstyle{rm}\Url}\fi

\bibitem[Guo et~al.(2025)Guo, Yang, Zhang, Song, Zhang, Xu, Zhu, Ma, Wang, Bi, et~al.]{guo2025deepseekr1}
Daya Guo, Dejian Yang, Haowei Zhang, Junxiao Song, Ruoyu Zhang, Runxin Xu, Qihao Zhu, Shirong Ma, Peiyi Wang, Xiao Bi, et~al.
\newblock Deepseek-r1: Incentivizing reasoning capability in {LLMs} via reinforcement learning.
\newblock \emph{arXiv preprint arXiv:2501.12948}, 2025.

\bibitem[Hu et~al.(2025)Hu, Zhang, Han, Jiang, Zhang, and Shum]{hu2025openreasonerzero}
Jingcheng Hu, Yinmin Zhang, Qi~Han, Daxin Jiang, Xiangyu Zhang, and Heung-Yeung Shum.
\newblock Open-reasoner-zero: An open source approach to scaling up reinforcement learning on the base model.
\newblock \emph{arXiv preprint arXiv:2503.24290}, 2025.

\bibitem[Hendrycks et~al.(2021)Hendrycks, Burns, Kadavath, Arora, Basart, Tang, Song, and Steinhardt]{hendrycks2021math}
Dan Hendrycks, Collin Burns, Saurav Kadavath, Akul Arora, Steven Basart, Eric Tang, Dawn Song, and Jacob Steinhardt.
\newblock Measuring mathematical problem solving with the {MATH} dataset.
\newblock \emph{arXiv preprint arXiv:2103.03874}, 2021.

\bibitem[Li et~al.(2022)Li, Choi, Chung, Kushman, Schrittwieser, Leblond, Eccles, Keeling, Gimeno, Dal~Lago, Hubert, Choy, de~Masson~d'Autume, Babuschkin, Chen, Huang, Welbl, Gowal, Cherepanov, Molloy, Mankowitz, Robson, Kohli, de~Freitas, Kavukcuoglu, and Vinyals]{li2022alphacode}
Yujia Li, David Choi, Junyoung Chung, Nate Kushman, Julian Schrittwieser, R{\'e}mi Leblond, Tom Eccles, James Keeling, Felix Gimeno, Agustin Dal~Lago, Thomas Hubert, Peter Choy, Cyprien de~Masson~d'Autume, Igor Babuschkin, Xinyun Chen, Po-Sen Huang, Johannes Welbl, Sven Gowal, Alexey Cherepanov, James Molloy, Daniel Mankowitz, Esme~Sutherland Robson, Pushmeet Kohli, Nando de~Freitas, Koray Kavukcuoglu, and Oriol Vinyals.
\newblock Competition-level code generation with {AlphaCode}.
\newblock \emph{arXiv preprint arXiv:2203.07814}, 2022.

\bibitem[Rein et~al.(2024)Rein, Hou, Cooper~Stickland, Petty, Pang, Dirani, Michael, and Bowman]{rein2024gpqa}
David Rein, Betty~Li Hou, Asa Cooper~Stickland, Jackson Petty, Richard~Yuanzhe Pang, Julien Dirani, Julian Michael, and Samuel~R Bowman.
\newblock {GPQA}: A graduate-level google-proof q\&a benchmark.
\newblock In \emph{First Conference on Language Modeling}, 2024.

\bibitem[He et~al.(2025)He, Fried, and Welleck]{he2025rewarding}
Andre He, Daniel Fried, and Sean Welleck.
\newblock Rewarding the unlikely: Lifting {GRPO} beyond distribution sharpening.
\newblock \emph{arXiv preprint arXiv:2506.02355}, 2025.

\bibitem[Shao et~al.(2025)Shao, Li, Xin, Geng, Wang, Oh, Du, Lambert, Min, Krishna, Tsvetkov, Hajishirzi, Koh, and Zettlemoyer]{shao2025spuriousrewards}
Rulin Shao, Shuyue~Stella Li, Rui Xin, Scott Geng, Yiping Wang, Sewoong Oh, Simon~Shaolei Du, Nathan Lambert, Sewon Min, Ranjay Krishna, Yulia Tsvetkov, Hannaneh Hajishirzi, Pang~Wei Koh, and Luke Zettlemoyer.
\newblock Spurious rewards: Rethinking training signals in {RLVR}.
\newblock \emph{arXiv preprint arXiv:2506.10947}, 2025.

\bibitem[Yue et~al.(2025)Yue, Chen, Lu, Zhao, Wang, Song, and Huang]{yue2025doesrlincentivizereasoning}
Yang Yue, Zhiqi Chen, Rui Lu, Andrew Zhao, Zhaokai Wang, Shiji Song, and Gao Huang.
\newblock Does reinforcement learning really incentivize reasoning capacity in llms beyond the base model?
\newblock \emph{arXiv preprint arXiv:2504.13837}, 2025.
\newblock URL \url{https://arxiv.org/abs/2504.13837}.

\bibitem[Song et~al.(2025)Song, Kempe, and Munos]{song2025-outcomebasedexploration}
Yuda Song, Julia Kempe, and R{\'e}mi Munos.
\newblock Outcome-based exploration for {LLM} reasoning.
\newblock \emph{arXiv preprint arXiv:2509.06941}, 2025.
\newblock URL \url{https://arxiv.org/abs/2509.06941}.

\bibitem[Shao et~al.(2024)Shao, Wang, Zhu, Xu, Song, Bi, Zhang, Zhang, Li, Wu, and Guo]{shao2024deepseekmath}
Zhihong Shao, Peiyi Wang, Qihao Zhu, Runxin Xu, Junxiao Song, Xiao Bi, Haowei Zhang, Mingchuan Zhang, Y.~K. Li, Y.~Wu, and Daya Guo.
\newblock Deepseek-math: Advancing mathematical reasoning through step-by-step exploration.
\newblock \emph{arXiv preprint arXiv:2404.01140}, 2024.

\bibitem[Prabhudesai et~al.(2025)Prabhudesai, Chen, Ippoliti, Fragkiadaki, Liu, and Pathak]{prabhudesai2025maximizingconfidence}
Mihir Prabhudesai, Lili Chen, Alex Ippoliti, Katerina Fragkiadaki, Hao Liu, and Deepak Pathak.
\newblock Maximizing confidence alone improves reasoning.
\newblock \emph{arXiv preprint arXiv:2505.22660}, 2025.
\newblock URL \url{https://arxiv.org/abs/2505.22660}.

\bibitem[Ouyang et~al.(2022)Ouyang, Wu, Jiang, Almeida, Wainwright, Mishkin, Zhang, Agarwal, Slama, Ray, et~al.]{ouyang2022traininglfh}
Long Ouyang, Jeffrey Wu, Xu~Jiang, Diogo Almeida, Carroll~L. Wainwright, Pamela Mishkin, Chong Zhang, Sandhini Agarwal, Katarina Slama, Alex Ray, et~al.
\newblock Training language models to follow instructions with human feedback.
\newblock In \emph{NeurIPS}, volume~35, pages 27730--27744, 2022.

\bibitem[Lambert et~al.(2024)Lambert, Morrison, Pyatkin, Huang, Ivison, Brahman, Miranda, Liu, Dziri, Lyu, et~al.]{lambert2024tulu3}
Nathan Lambert, Jacob Morrison, Valentina Pyatkin, Shengyi Huang, Hamish Ivison, Faeze Brahman, Lester James~V Miranda, Alisa Liu, Nouha Dziri, Shane Lyu, et~al.
\newblock T{\"u}lu 3: Pushing frontiers in open language model post-training.
\newblock \emph{arXiv preprint arXiv:2411.15124}, 2024.

\bibitem[Zeng et~al.(2025)Zeng, Huang, Liu, Liu, He, Ma, and He]{zeng2025simplerlzoo}
Weihao Zeng, Yuzhen Huang, Qian Liu, Wei Liu, Keqing He, Zejun Ma, and Junxian He.
\newblock Simplerlzoo: Investigating and taming zero reinforcement learning for open base models in the wild.
\newblock \emph{arXiv preprint arXiv:2503.18892}, 2025.
\newblock URL \url{https://arxiv.org/abs/2503.18892}.

\bibitem[Zhao et~al.(2025)Zhao, Kang, Feng, Levine, and Song]{zhao2025learning}
Xuandong Zhao, Zhewei Kang, Aosong Feng, Sergey Levine, and Dawn Song.
\newblock Learning to reason without external rewards.
\newblock \emph{arXiv preprint arXiv:2505.19590}, 2025.
\newblock URL \url{https://arxiv.org/abs/2505.19590}.

\bibitem[Zhao et~al.(2024)Zhao, Brekelmans, Makhzani, and Grosse]{zhao2024probabilisticinference}
Stephen Zhao, Rob Brekelmans, Alireza Makhzani, and Roger Grosse.
\newblock Probabilistic inference in language models via twisted sequential monte carlo.
\newblock \emph{arXiv preprint arXiv:2404.17546}, 2024.
\newblock URL \url{https://arxiv.org/abs/2404.17546}.

\bibitem[Chopin(2004)]{chopin2004cltsmc}
Nicolas Chopin.
\newblock Central limit theorem for sequential monte carlo methods and its application to bayesian inference.
\newblock \emph{The Annals of Statistics}, 32\penalty0 (6):\penalty0 2385--2411, 2004.
\newblock \doi{10.1214/009053604000000615}.
\newblock URL \url{https://projecteuclid.org/journals/annals-of-statistics/volume-32/issue-6/Central-limit-theorem-for-sequential-Monte-Carlo-methods-and-its/10.1214/009053604000000698.full}.

\bibitem[Faria et~al.(2024)Faria, Agrawal, Farinhas, Rei, de~Souza, and Martins]{faria2024quest}
Gonçalo R.~A. Faria, Sweta Agrawal, António Farinhas, Ricardo Rei, José G.~C. de~Souza, and André F.~T. Martins.
\newblock Quest: Quality-aware metropolis-hastings sampling for machine translation.
\newblock In \emph{NeurIPS}, 2024.
\newblock URL \url{https://proceedings.neurips.cc/paper_files/paper/2024/file/a221d22ff6a33599142c8299c7ed06bb-Paper-Conference.pdf}.

\bibitem[Neal(1998)]{neal1998annealedimportance}
Radford~M. Neal.
\newblock Annealed importance sampling.
\newblock \emph{arXiv preprint physics/9803008}, 1998.
\newblock URL \url{https://arxiv.org/abs/physics/9803008}.

\bibitem[{\L}atuszy{\'n}ski et~al.(2025){\L}atuszy{\'n}ski, Moores, and Stumpf-F{\'e}tizon]{latuszynski2025mcmcmultimodal}
Krzysztof {\L}atuszy{\'n}ski, Matthew~T. Moores, and Timoth{\'e}e Stumpf-F{\'e}tizon.
\newblock Mcmc for multi-modal distributions.
\newblock \emph{arXiv preprint arXiv:2501.05908}, 2025.
\newblock URL \url{https://arxiv.org/abs/2501.05908v1}.

\bibitem[Du et~al.(2023)Du, Durkan, Strudel, Tenenbaum, Dieleman, Fergus, Sohl-Dickstein, Doucet, and Grathwohl]{du2023reduce}
Yilun Du, Conor Durkan, Robin Strudel, Joshua~B Tenenbaum, Sander Dieleman, Rob Fergus, Jascha Sohl-Dickstein, Arnaud Doucet, and Will~Sussman Grathwohl.
\newblock Reduce, reuse, recycle: Compositional generation with energy-based diffusion models and mcmc.
\newblock In \emph{International conference on machine learning}, pages 8489--8510. PMLR, 2023.

\bibitem[Kim et~al.(2025)Kim, Kim, and Park]{kim2025testtimealignment}
Sunwoo Kim, Minkyu Kim, and Dongmin Park.
\newblock Test-time alignment of diffusion models without reward over-optimization.
\newblock \emph{arXiv preprint arXiv:2501.05803}, 2025.
\newblock URL \url{https://arxiv.org/abs/2501.05803}.

\bibitem[Karan et~al.(2025)Karan, Shah, and Chen]{karan2025reguidance}
Aayush Karan, Kulin Shah, and Sitan Chen.
\newblock Reguidance: A simple diffusion wrapper for boosting sample quality on hard inverse problems.
\newblock \emph{arXiv preprint arXiv:2506.10955}, 2025.
\newblock URL \url{https://arxiv.org/abs/2506.10955}.

\bibitem[Wang et~al.(2025)Wang, Wang, Du, Sundaralingam, Yang, Chao, P{\'e}rez-D’Arpino, Fox, and Shah]{wang2025inference}
Yanwei Wang, Lirui Wang, Yilun Du, Balakumar Sundaralingam, Xuning Yang, Yu-Wei Chao, Claudia P{\'e}rez-D’Arpino, Dieter Fox, and Julie Shah.
\newblock Inference-time policy steering through human interactions.
\newblock In \emph{2025 IEEE International Conference on Robotics and Automation (ICRA)}, pages 15626--15633. IEEE, 2025.

\bibitem[Kong et~al.(2025)Kong, Du, Mu, Neklyudov, De~Bortoli, Wu, Wang, Ferber, Ma, Gomes, and Zhang]{kong2025diffusionconstrainedopt}
Lingkai Kong, Yuanqi Du, Wenhao Mu, Kirill Neklyudov, Valentin De~Bortoli, Dongxia Wu, Haorui Wang, Aaron~M. Ferber, Yian Ma, Carla~P. Gomes, and Chao Zhang.
\newblock Diffusion models as constrained samplers for optimization with unknown constraints.
\newblock In Yingzhen Li, Stephan Mandt, Shipra Agrawal, and Emtiyaz Khan, editors, \emph{Proceedings of The 28th International Conference on Artificial Intelligence and Statistics}, volume 258 of \emph{Proceedings of Machine Learning Research}, pages 4582--4590. PMLR, 2025.
\newblock URL \url{https://proceedings.mlr.press/v258/kong25b.html}.

\bibitem[Zhang et~al.(2025)Zhang, Lin, Ye, Zou, Ma, Liang, and Du]{zhang2025inference}
Xiangcheng Zhang, Haowei Lin, Haotian Ye, James Zou, Jianzhu Ma, Yitao Liang, and Yilun Du.
\newblock Inference-time scaling of diffusion models through classical search.
\newblock \emph{arXiv preprint arXiv:2505.23614}, 2025.

\bibitem[Sambridge(2014)]{sambridge2014paralleltempering}
Malcolm Sambridge.
\newblock A parallel tempering algorithm for probabilistic sampling and optimization.
\newblock \emph{Geophysical Journal International}, 196\penalty0 (1):\penalty0 357--374, 2014.
\newblock \doi{10.1093/gji/ggt374}.
\newblock URL \url{https://academic.oup.com/gji/article/196/1/357/585739}.

\bibitem[Skreta et~al.(2025)Skreta, Akhound-Sadegh, Ohanesian, Bondesan, Aspuru-Guzik, Doucet, Brekelmans, Tong, and Neklyudov]{skreta2025feynmankacorrectors}
Marta Skreta, Tara Akhound-Sadegh, Viktor Ohanesian, Roberto Bondesan, Alán Aspuru-Guzik, Arnaud Doucet, Rob Brekelmans, Alexander Tong, and Kirill Neklyudov.
\newblock Feynman-kac correctors in diffusion: Annealing, guidance, and product of experts.
\newblock \emph{arXiv preprint arXiv:2503.02819}, 2025.
\newblock URL \url{https://arxiv.org/abs/2503.02819}.

\bibitem[Xu et~al.(2025)Xu, Wu, Park, Zhou, and Tulsiani]{xu2025temporalscorerescaling}
Yanbo Xu, Yu~Wu, Sungjae Park, Zhizhuo Zhou, and Shubham Tulsiani.
\newblock Temporal score rescaling for temperature sampling in diffusion and flow models.
\newblock \emph{arXiv preprint arXiv:2510.01184}, 2025.
\newblock URL \url{https://arxiv.org/abs/2510.01184}.

\bibitem[Geffner et~al.(2025)Geffner, Didi, Zhang, Reidenbach, Cao, Yim, Geiger, Dallago, Kucukbenli, and Vahdat]{geffner2025proteina}
Tomas Geffner, Kieran Didi, Zuobai Zhang, Danny Reidenbach, Zhonglin Cao, Jason Yim, Mario Geiger, Christian Dallago, Emine Kucukbenli, and Arash Vahdat.
\newblock Proteina: Scaling flow-based protein structure generative models.
\newblock \emph{arXiv preprint arXiv:2503.00710}, 2025.
\newblock URL \url{https://arxiv.org/abs/2503.00710}.

\bibitem[Wang et~al.(2020)Wang, Hsieh, Chang, Chen, Pan, Wei, and Juan]{wang2020contextualtemperature}
Pei-Hsin Wang, Sheng-Iou Hsieh, Shih-Chieh Chang, Yu-Ting Chen, Jia-Yu Pan, Wei Wei, and Da-Chang Juan.
\newblock Contextual temperature for language modeling.
\newblock \emph{arXiv preprint arXiv:2012.13575}, 2020.
\newblock URL \url{https://arxiv.org/abs/2012.13575}.

\bibitem[Li et~al.(2025)Li, Karan, and Chen]{li2025blinkofaneyetheory}
Marvin Li, Aayush Karan, and Sitan Chen.
\newblock Blink of an eye: A simple theory for feature localization in generative models.
\newblock \emph{arXiv preprint arXiv:2502.00921}, 2025.
\newblock URL \url{https://arxiv.org/abs/2502.00921}.

\bibitem[Abdin et~al.(2024)Abdin, Aneja, Behl, Bubeck, Eldan, Gunasekar, Harrison, Hewett, Javaheripi, Kauffmann, Lee, Lee, Li, Liu, Mendes, Nguyen, Price, de~Rosa, Saarikivi, Salim, Shah, Wang, Ward, Wu, Yu, Zhang, and Zhang]{abdin2024phi4}
Marah Abdin, Jyoti Aneja, Harkirat Behl, Sébastien Bubeck, Ronen Eldan, Suriya Gunasekar, Michael Harrison, Russell~J. Hewett, Mojan Javaheripi, Piero Kauffmann, James~R. Lee, Yin~Tat Lee, Yuanzhi Li, Weishung Liu, Caio C.~T. Mendes, Anh Nguyen, Eric Price, Gustavo de~Rosa, Olli Saarikivi, Adil Salim, Shital Shah, Xin Wang, Rachel Ward, Yue Wu, Dingli Yu, Cyril Zhang, and Yi~Zhang.
\newblock Phi-4 technical report.
\newblock \emph{arXiv preprint arXiv:2412.08905}, 2024.
\newblock URL \url{https://arxiv.org/abs/2412.08905}.

\bibitem[Metropolis et~al.(1953)Metropolis, Rosenbluth, Rosenbluth, Teller, and Teller]{metropolis1953equation}
Nicholas Metropolis, Arianna~W. Rosenbluth, Marshall~N. Rosenbluth, Augusta~H. Teller, and Edward Teller.
\newblock Equation of state calculations by fast computing machines.
\newblock \emph{Journal of Chemical Physics}, 21\penalty0 (6):\penalty0 1087--1092, 1953.
\newblock \doi{10.1063/1.1699114}.

\bibitem[Neal(1993)]{neal1993probabilistic}
Radford~M Neal.
\newblock Probabilistic inference using markov chain monte carlo methods.
\newblock \emph{Department of Computer Science, University of Toronto (review paper / technical report)}, 1993.

\bibitem[Gheissari et~al.(2017)Gheissari, Lubetzky, and Peres]{gheissari2017exponentially}
Reza Gheissari, Eyal Lubetzky, and Yuval Peres.
\newblock Exponentially slow mixing in the mean-field swendsen–wang dynamics.
\newblock \emph{arXiv preprint arXiv:1702.05797}, 2017.

\bibitem[Bandeira et~al.(2022)Bandeira, Maillard, Nickl, and Wang]{bandeira2022freeenergybarriers}
Afonso~S. Bandeira, Antoine Maillard, Richard Nickl, and Sven Wang.
\newblock On free energy barriers in gaussian priors and failure of cold start mcmc for high-dimensional unimodal distributions.
\newblock \emph{arXiv preprint arXiv:2209.02001}, 2022.
\newblock URL \url{https://arxiv.org/abs/2209.02001}.

\bibitem[Schmidler and Woodard(2013)]{schmidlerwoodard2013lowerbounds}
Scott~C. Schmidler and Dawn~B. Woodard.
\newblock Lower bounds on the convergence rates of adaptive mcmc methods.
\newblock Technical report, Duke University / Cornell University, 2013.
\newblock URL \url{https://www2.stat.duke.edu/~scs/Papers/AdaptiveLowerBounds_AS.pdf}.

\bibitem[Lightman et~al.(2024)Lightman, Kosaraju, Burda, Edwards, Baker, Lee, Leike, Schulman, Sutskever, and Cobbe]{lightman2024lets}
Hunter Lightman, Vineet Kosaraju, Yura Burda, Harri Edwards, Bowen Baker, Teddy Lee, Jan Leike, John Schulman, Ilya Sutskever, and Karl Cobbe.
\newblock Let's verify step by step.
\newblock In \emph{The Twelfth International Conference on Learning Representations}, 2024.
\newblock URL \url{https://openreview.net/forum?id=v8L0pN6EOi}.

\bibitem[Chen et~al.(2021)Chen, Tworek, Jun, Yuan, de~Oliveira~Pinto, Kaplan, Edwards, Burda, Joseph, Brockman, Ray, Puri, Krueger, Petrov, Khlaaf, Sastry, Mishkin, Chan, Gray, Ryder, Pavlov, Power, Kaiser, Bavarian, Winter, Tillet, Such, Cummings, Plappert, Chantzis, Barnes, Herbert-Voss, Guss, Nichol, Paino, Tezak, Tang, Babuschkin, Balaji, Jain, Saunders, Hesse, Carr, Leike, Achiam, Misra, Morikawa, Radford, Knight, Brundage, Murati, Mayer, Welinder, McGrew, Amodei, McCandlish, Sutskever, and Zaremba]{chen2021evaluatingllmcode}
M.~Chen, J.~Tworek, H.~Jun, Q.~Yuan, H.~P. de~Oliveira~Pinto, J.~Kaplan, H.~Edwards, Y.~Burda, N.~Joseph, G.~Brockman, A.~Ray, R.~Puri, G.~Krueger, M.~Petrov, H.~Khlaaf, G.~Sastry, P.~Mishkin, B.~Chan, S.~Gray, N.~Ryder, M.~Pavlov, A.~Power, L.~Kaiser, M.~Bavarian, C.~Winter, P.~Tillet, F.~P. Such, D.~Cummings, M.~Plappert, F.~Chantzis, E.~Barnes, A.~Herbert-Voss, W.~H. Guss, A.~Nichol, A.~Paino, N.~Tezak, J.~Tang, I.~Babuschkin, S.~Balaji, S.~Jain, W.~Saunders, C.~Hesse, A.~N. Carr, J.~Leike, J.~Achiam, V.~Misra, E.~Morikawa, A.~Radford, M.~Knight, M.~Brundage, M.~Murati, K.~Mayer, P.~Welinder, B.~McGrew, D.~Amodei, S.~McCandlish, I.~Sutskever, and W.~Zaremba.
\newblock Evaluating large language models trained on code.
\newblock \emph{CoRR}, abs/2107.03374, 2021.
\newblock URL \url{https://arxiv.org/abs/2107.03374}.

\bibitem[Dubois et~al.(2024)Dubois, Galambosi, Liang, and Hashimoto]{dubois2024lengthcontrolledalpacaeval}
Yann Dubois, Balázs Galambosi, Percy Liang, and Tatsunori~B. Hashimoto.
\newblock Length-controlled alpacaeval: A simple way to debias automatic evaluators.
\newblock \emph{arXiv preprint arXiv:2404.04475}, 2024.
\newblock URL \url{https://arxiv.org/abs/2404.04475}.

\end{thebibliography}
}


\newpage

\appendix
\section{Appendix}

\subsection{Additional Theoretical Discussion}\label{apx:proof}

In this section, we provide a stronger formalization of the phenomenon that power sampling downweights tokens that trap outputs in low-likelihood futures while low-temperature sampling does not. 

\begin{proposition}[\textit{Informal}]
    Power sampling upweights tokens with small support but high likelihood completions, while low-temperature sampling upweights tokens with large support but low likelihood completions.
\end{proposition}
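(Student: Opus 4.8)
The plan is to upgrade the two-token example into a statement about an arbitrary pair of candidate continuation tokens. Fix a prefix $x_{<t}$ and two tokens $a,b$. As in equations~(\ref{eq:sumexp}) and~(\ref{eq:expsum}), the unnormalized next-token weights of the two samplers are $w_{\mathrm{pow}}(c) = \sum_{x_{>t}} p(x_{<t}, c, x_{>t})^{\alpha}$ and $w_{\mathrm{temp}}(c) = \bigl(\sum_{x_{>t}} p(x_{<t}, c, x_{>t})\bigr)^{\alpha} = p(x_{<t}, c)^{\alpha}$, where for each sampler the omitted normalization is common to $c=a$ and $c=b$. Write $\Omega_c = \{x_{>t} : p(x_{<t}, c, x_{>t}) > 0\}$ for the completion support of $c$, $s_c = |\Omega_c|$, and $m_c = p(x_{<t}, c)$ for the base-model mass of the prefix ending in $c$.

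The formal statement I would prove is: suppose every completion after $a$ has joint likelihood at least $\underline{q}$ and every completion after $b$ has joint likelihood at most $\overline{q}$ with $\underline{q} > \overline{q}$ (``$a$ leads to uniformly higher-likelihood futures''), and suppose $m_b > m_a$ (``$b$ carries more total base-model mass''). Then: (i) necessarily $s_b > s_a$, so $b$ is the large-support token; (ii) $w_{\mathrm{temp}}(b) > w_{\mathrm{temp}}(a)$ for every $\alpha \ge 1$, so low-temperature sampling always prefers the large-support token $b$; and (iii) for every $\alpha$ exceeding the threshold $\alpha^{\star} = 1 + \log\!\bigl(m_b/(s_a \underline{q})\bigr)/\log(\underline{q}/\overline{q})$ we have $w_{\mathrm{pow}}(a) > w_{\mathrm{pow}}(b)$, so power sampling prefers the small-support, high-likelihood token $a$.

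The three parts are short. For (i), $m_a = \sum_{x_{>t} \in \Omega_a} p(x_{<t}, a, x_{>t}) \ge s_a \underline{q}$ and $m_b = \sum_{x_{>t} \in \Omega_b} p(x_{<t}, b, x_{>t}) \le s_b \overline{q}$, so $m_b > m_a$ forces $s_b \overline{q} > s_a \underline{q}$, hence $s_b > s_a(\underline{q}/\overline{q}) > s_a$. For (ii), $w_{\mathrm{temp}}(b)/w_{\mathrm{temp}}(a) = (m_b/m_a)^{\alpha} > 1$. For (iii), bound $w_{\mathrm{pow}}(a) \ge s_a \underline{q}^{\alpha}$ (a sum of exactly $s_a$ terms, each at least $\underline{q}^{\alpha}$) and, pulling one factor of $p$ out of each summand, $w_{\mathrm{pow}}(b) = \sum_{x_{>t}} p(x_{<t}, b, x_{>t})^{\alpha-1} p(x_{<t}, b, x_{>t}) \le \overline{q}^{\,\alpha-1} m_b$, the inequality using $\alpha \ge 1$. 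Thus $w_{\mathrm{pow}}(a) > w_{\mathrm{pow}}(b)$ whenever $s_a \underline{q}^{\alpha} > \overline{q}^{\,\alpha-1} m_b$, equivalently $(\underline{q}/\overline{q})^{\alpha-1} > m_b/(s_a \underline{q})$; since $\underline{q}/\overline{q} > 1$ this holds for all large $\alpha$, and taking logarithms gives the threshold $\alpha^{\star}$.

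I would finish by checking the worked example ($\underline{q} = 0.40$, $\overline{q} = 0.25$, $m_a = 0.40$, $m_b = 0.50$, $s_a = 1$ give $\alpha^{\star} = 1 + \log(1.25)/\log(1.6) \approx 1.47$, matching the flip observed at $\alpha = 2$) and by noting that when the completions after each token are equiprobable the bounds collapse to the exact identity $w_{\mathrm{pow}}(c) = m_c^{\alpha} s_c^{\,1-\alpha}$, which exhibits the $s_c^{\,1-\alpha}$ penalty on support size transparently. The one genuinely delicate point is choosing a formalization of ``upweights'' that is both clean and faithful: the reading used here---low-temperature sampling is forced to respect the base-model marginal ranking while power sampling overrides it once $\alpha$ is large enough---seems to be the right one, and modulo that choice the estimates are routine. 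A secondary bookkeeping point is handling the sums over variable-length completions $x_{>t}$ so that $\sum_{x_{>t}} p(x_{<t}, c, x_{>t})$ is exactly the marginal $m_c$.
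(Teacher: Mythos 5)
Your proposal is correct and provably equivalent to the paper's argument in the overlap, but it is cast in a genuinely more general form. The paper's Appendix formalization hard-codes two extreme archetypes --- a \emph{positive pivotal token} that concentrates its entire marginal weight $\varepsilon$ on a single completion, and a \emph{negative pivotal token} that spreads its marginal weight $\varepsilon'$ uniformly over $N$ completions --- and then, for a \emph{fixed} $\alpha$, exhibits a window condition $\varepsilon'/N^{1-1/\alpha} < \varepsilon < \varepsilon'$ on the marginal weights under which the two samplers disagree. You instead keep the two tokens' completion distributions unconstrained apart from uniform floor/ceiling bounds $\underline q$ and $\overline q$, and express the conclusion as an explicit threshold $\alpha^\star$ above which power sampling flips the preference. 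The two parameterizations are interchangeable: in the paper's special case one has $s_a=1$, $\underline q = m_a = \varepsilon$, $m_b=\varepsilon'$, $\overline q = \varepsilon'/N$, and your $\alpha^\star = 1 + \log(\varepsilon'/\varepsilon)/\log(\varepsilon N/\varepsilon')$ simplifies to $\log N / (\log N - \log(\varepsilon'/\varepsilon))$, which is exactly the $\alpha$-threshold implicit in the paper's condition; moreover your estimates $w_{\mathrm{pow}}(a)\ge s_a\underline q^{\alpha}$ and $w_{\mathrm{pow}}(b)\le \overline q^{\,\alpha-1}m_b$ become equalities in that case, so nothing is lost. What your version buys is that it does not require the completion masses to be literally singular or uniform, only bounded, which is closer to how one would actually try to verify the phenomenon empirically; what the paper's version buys is a cleaner ``one high-likelihood path versus $N$ low-likelihood paths'' picture that matches the two-token Example in the main text word for word. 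Your derivation of part (i) (that the hypotheses force $s_b > s_a$), the observation that $w_{\mathrm{temp}}$ is monotone in the base-model marginal for every $\alpha\ge1$, and the remark that the equiprobable case gives the closed form $w_{\mathrm{pow}}(c)=m_c^{\alpha}s_c^{\,1-\alpha}$ are all sound and are not in the paper; they make the ``support-size penalty'' of power sampling more transparent than the paper's own statement does.
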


\begin{definition}
    \rm For the rest of this section, fix a prefix $x_{0:t-1}$. We say that $x_t$ has \textbf{marginal weight} $\varepsilon$ under the conditional next-token distribution if $\sum_{x>t}p(x_0, \dots, x_t, \dots x_T) = \varepsilon$.
\end{definition}

We consider a simplified model of the ``critical window'' or ``pivotal token'' phenomenon \citep{li2025blinkofaneyetheory, abdin2024phi4}, which refers to intermediate tokens that strongly influence the quality of the final generation. We differentiate between pivotal tokens that lead to high-likelihood futures vs. low-likelihood ones. 

\begin{definition}
\rm
    At one extreme, a pivotal token maximally induces a high-likelihood completion if it places its entire marginal weight $\varepsilon$ on one future (singular support); i.e., for only one choice of $x>t$ is $p(x_0, \dots, x_t, \dots, x_T)$ nonzero. We call such a token a \textbf{positive pivotal token.}
\end{definition}

\begin{definition}
\rm
    At the other extreme, a pivotal token minimizes the likelihood of any future if its entire marginal weight $\varepsilon$ is uniformly distributed across $N$ future completions. In other words, there exist $N$ completions $x>t$ such that $p(x_0, \dots, x_t, \dots, x_T)$ are all nonzero with likelihood $\frac{\varepsilon}{N}$. We call such a token a \textbf{negative pivotal token.}
\end{definition}

Our simplified model of high and low-likelihood futures examines when positive pivotal tokens are favored over negative pivotal tokens under a given sampling distribution. In particular, we show that power sampling can upweight a positive pivotal token over a negative one even if the latter has a higher marginal weight, whereas low-temperature sampling always upweights the negative pivotal token in such a scenario. 

Of course, whenever a positive pivotal token has higher marginal weight, both power sampling and low-temperature sampling will upweight it.

\begin{proposition}
    Let $x_t$ be a positive pivotal token with marginal weight $\varepsilon$, and let $x_t'$ be a negative pivotal token with marginal weight $\varepsilon'$ and support $N$. Then if 
    \begin{equation}
        \frac{\varepsilon'}{N^{1 - 1/\alpha}} < \varepsilon < \varepsilon',
    \end{equation}
    the future likelihood of $x_t$ is higher than any future likelihood of $x_t'$. Moreover, power sampling upweights $x_t$ over $x_t'$ while low-temperature sampling upweights $x_t'$ over $x_t$.
\end{proposition}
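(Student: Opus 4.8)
The plan is to compute explicitly the three quantities that govern the claim: (i) the largest future likelihood achievable along the positive pivotal token $x_t$, (ii) the (unique) future likelihood along the negative pivotal token $x_t'$, and (iii) the next-token weights assigned by power sampling and by low-temperature sampling, as given in equations (\ref{eq:sumexp}) and (\ref{eq:expsum}). For (i), a positive pivotal token has singular support, so its entire marginal weight sits on one completion: the unique nonzero value of $p(x_0,\dots,x_t,\dots,x_T)$ equals $\varepsilon$. For (ii), a negative pivotal token spreads $\varepsilon'$ uniformly over $N$ completions, so each has likelihood $\varepsilon'/N$. The first bullet of the proposition — that $x_t$'s future likelihood exceeds every future likelihood of $x_t'$ — is then literally the statement $\varepsilon > \varepsilon'/N$, which follows immediately from the left inequality $\varepsilon'/N^{1-1/\alpha} < \varepsilon$ since $N^{1-1/\alpha} \le N$ for $\alpha \ge 1$ (with $N \ge 1$). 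So that part is a one-line consequence and not the crux.

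Next I would handle the low-temperature comparison, which is the easy direction. By (\ref{eq:expsum}), $p_{\text{temp}}(x_t \mid x_{<t}) \propto (\sum_{x_{>t}} p(\cdots))^\alpha = \varepsilon^\alpha$ and $p_{\text{temp}}(x_t' \mid x_{<t}) \propto (\varepsilon')^\alpha$. Since $\varepsilon < \varepsilon'$ by the right inequality, raising to the power $\alpha \ge 1$ preserves the order, so low-temperature sampling strictly upweights $x_t'$ over $x_t$. This uses only the right half of the hypothesized sandwich and monotonicity of $s \mapsto s^\alpha$.

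The substantive step is the power-sampling comparison. By (\ref{eq:sumexp}), $p_{\text{pow}}(x_t \mid x_{<t}) \propto \sum_{x_{>t}} p(\cdots)^\alpha$. For the positive pivotal token this sum has one nonzero term, giving $\varepsilon^\alpha$. For the negative pivotal token it has $N$ terms each equal to $(\varepsilon'/N)^\alpha$, giving $N \cdot (\varepsilon'/N)^\alpha = (\varepsilon')^\alpha / N^{\alpha - 1}$. Power sampling therefore upweights $x_t$ over $x_t'$ precisely when $\varepsilon^\alpha > (\varepsilon')^\alpha / N^{\alpha-1}$, i.e. when $\varepsilon > \varepsilon' / N^{(\alpha-1)/\alpha} = \varepsilon'/N^{1-1/\alpha}$ — which is exactly the left inequality in the hypothesis. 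The key point to emphasize is why the two samplers disagree: the exponent $\alpha - 1$ on $N$ in the denominator reflects the fact that power sampling "pays" for dispersing mass across many futures (each future is individually exponentiated and hence penalized), whereas low-temperature sampling collapses the $N$ futures into a single sum before exponentiating and thus rewards having more of them.

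I expect no genuine obstacle here: once the definitions of positive/negative pivotal token and marginal weight are unwound, the proof is a direct calculation separating the "sum of exponents" and "exponent of sums" weights, followed by an elementary algebraic rearrangement of the inequality $\varepsilon^\alpha > (\varepsilon')^\alpha N^{1-\alpha}$ into the stated form. The one thing to be careful about is the regime of $\alpha$: the argument requires $\alpha > 1$ (so that $N^{1-1/\alpha} < N$ strictly and the denominator $N^{\alpha-1}$ genuinely penalizes dispersion); for $\alpha = 1$ both samplers coincide and the strict inequalities become vacuous, and for $\alpha = \infty$ one should interpret $N^{1-1/\alpha}$ as $N$, recovering the pure argmax behavior. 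I would state these edge cases briefly and otherwise present the chain of equivalences cleanly.
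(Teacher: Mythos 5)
Your proof is correct and follows essentially the same route as the paper's: compute the single nonzero completion likelihood $\varepsilon$ for the positive pivotal token and the $N$ equal completions $\varepsilon'/N$ for the negative one, plug into the sum-of-exponents vs.\ exponent-of-sums weights to get $\varepsilon^\alpha$ vs.\ $(\varepsilon')^\alpha/N^{\alpha-1}$ for power sampling and $\varepsilon^\alpha$ vs.\ $(\varepsilon')^\alpha$ for low-temperature sampling, then rearrange. Your added remarks on the edge cases $\alpha=1$ (condition becomes vacuous) and the conceptual reading of the $N^{\alpha-1}$ penalty are sensible elaborations, not a different argument.
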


\begin{proof}
Since $\alpha \geq 1$, it follows that \begin{equation}
    \frac{\varepsilon'}{N^{1 - 1/\alpha}} > \frac{\varepsilon'}{N}
\end{equation}
and thus $\varepsilon > \frac{\varepsilon'}{N}$, establishing that the future completion likelihood of $x_t$ is greater than that of $x_t'$ (i.e. the assignment of positive and negative pivotal tokens is consistent).

Now, if $\varepsilon < \varepsilon'$, then under the low-temperature distribution, the relative marginal weights on $x_t$ and $x_t'$ are $\varepsilon^{\alpha}$ and $\varepsilon'^{\alpha}$, so the probability of choosing $x_t$ is \textit{downweighted} relative to $x_t'$. However, for the power distribution, the relative marginal weights are $p_{\text{pow}}(x_t | x_{<t}) = \varepsilon^{\alpha}$ and $p_{\text{pow}}(x_t' | x_{<t}) = \frac{\varepsilon'^{\alpha}}{N^{\alpha - 1}}$. Then, as long as $\varepsilon^{\alpha} > \frac{\varepsilon'^{\alpha}}{N^{\alpha - 1}} \iff \varepsilon > \frac{\varepsilon'}{N^{1 - 1/{\alpha}}}$, token $x_t$ will be \textit{upweighted} relative to token $x_t'$. 

In other words, the marginal weight on $x_t$ can be \textit{less than} the mass on $x_t'$ under $p$, but if the completion for $x_t$ has higher likelihood than any individual completion for $x_t'$, power sampling favors $x_t$ over $x_t'$.\end{proof}

\subsection{Pass@k Accuracies over Multiple Domains}\label{apx:passes}
In this section, we plot the pass@$k$ performance of power sampling, GRPO, and the base model (Qwen2.5-Math-7B) over MATH500, GPQA, and HumanEval to demonstrate that our sampling algorithm is highly performant at both single-shot and multi-shot reasoning while maintaining response diversity. Power sampling is plotted with $\alpha = 4.0$ for MATH500 and GPQA and $\alpha = 1.67$ for HumanEval (this temperature exhibits slightly better results at earlier $k$). In all cases, both in-domain and out-of-domain for GRPO, power sampling has near universally better performance than both GRPO and the base model in pass@$k$ for $k > 1$ and matches, if not exceeds, the base model upper bound at large $k$. 
\begin{figure}[h!]
    \centering
    \includegraphics[width=0.9\linewidth]{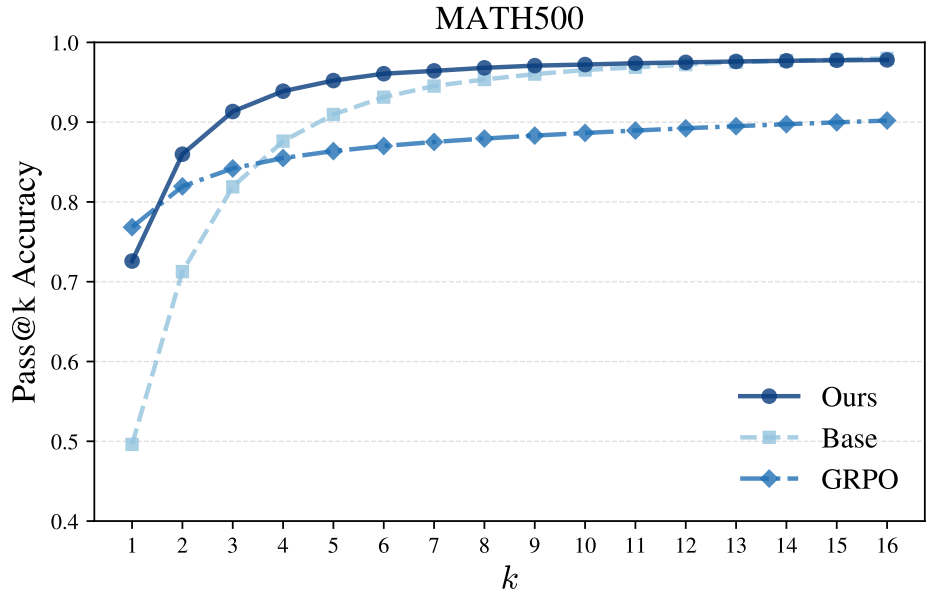}
    \captionsetup{font=small}
    \caption{\textbf{Pass@$k$ performance on MATH500 (Qwen2.5-Math-7B)}. }
    \label{fig:apxpass}
\end{figure}

\begin{figure}[h!]
    \centering
    \includegraphics[width=0.9\linewidth]{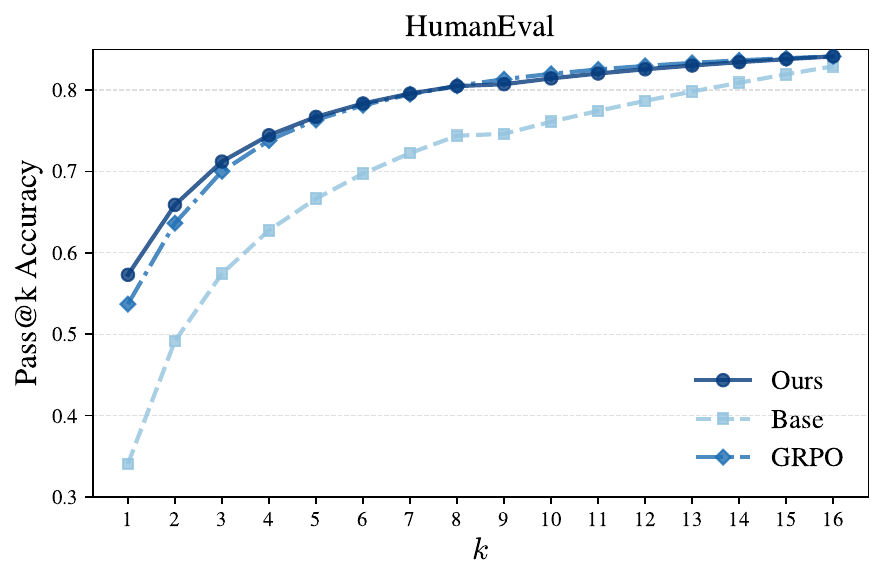}
    \captionsetup{font=small}
    \caption{\textbf{Pass@$k$ performance on HumanEval (Qwen2.5-Math-7B)}. }
    \label{fig:apxpass3}
\end{figure}

\begin{figure}[h!]
    \centering
    \includegraphics[width=0.9\linewidth]{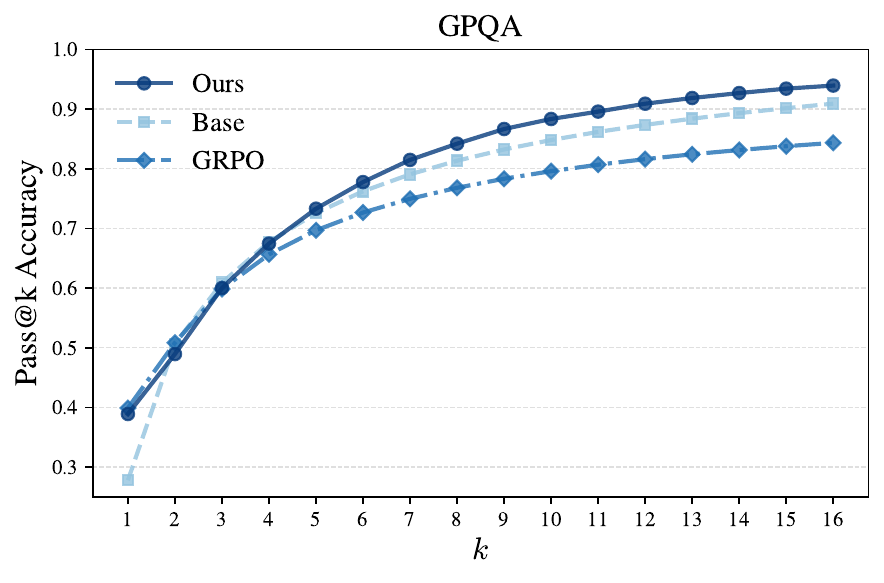}
    \captionsetup{font=small}
    \caption{\textbf{Pass@$k$ performance on GPQA (Qwen2.5-Math-7B)}. }
    \label{fig:apxpass2}
\end{figure}

One thing to note about these plots is that the loss in diversity varies noticeably from benchmark to benchmark. MATH500 and GPQA clearly show that GRPO has a significantly lower pass@$k$ performance and diversity even for smaller $k$, while on HumanEval GRPO exhibits better pass@$k$ than the base model until $k = 16$. We speculate this might be due to the fact that while MATH500 and GPQA are graded on a ``singular'' answer, HumanEval is based on completing code where multiple solutions might be accepted, indicating a reduction in diversity may emerge but at much higher $k$.

\subsection{More Qualitative Examples}\label{apx:more}
\begin{table}[h!]
\centering
\setlength{\tabcolsep}{6pt}
\renewcommand{\arraystretch}{1.15}

\newlength{\answercol}
\setlength{\answercol}{\dimexpr\textwidth - 1.1cm - 4\tabcolsep\relax}
\newlength{\answerinner}
\setlength{\answerinner}{0.95\answercol}

\setlength{\fullspan}{\dimexpr\textwidth - 2\tabcolsep\relax}

\begin{tabular}{Q{1.1cm} Z{\answercol}}
\toprule
\multicolumn{2}{c}{%
  \parbox{\fullspan}{\textit{Can you solve the following math problem? Write $\frac{3}{20}$ as a decimal. Please reason step by step, and put your final answer within \textbackslash boxed\{\{\}\}. Correct answer: $\boxed{0.15}$}}%
} \\
\midrule
System & Answer \\
\midrule
Ours &
\parbox{\answerinner}{\small
To convert the fraction $\frac{3}{20}$ to a decimal, we can follow these steps:

1. Identify the numerator and the denominator of the fraction. In this case, the numerator is 3 and the denominator is 20.  
2. Divide the numerator by the denominator. In this case, we divide 3 by 20.  
3. Perform the division. The result is 0.15.  

Therefore, the fraction $\frac{3}{20}$ as a decimal is $\boxed{0.15}$.
} \\
\midrule
GRPO &
\parbox{\answerinner}{\small
To convert the fraction $\frac{3}{20}$ to a decimal, we can follow these steps:

1. \textbf{Understand the fraction}: The fraction $\frac{3}{20}$ means 3 divided by 20.  

2. \textbf{Perform the division}: We need to divide 3 by 20. Since 3 is less than 20, we will add a decimal point and zeros to 3 to make the division possible.  

3. \textbf{Set up the division}: Write 3 as 3.000 (adding three zeros for precision).  

4. \textbf{Divide}:  
   - 20 goes into 30 once (since $20 \times 1 = 20$).  
   - Subtract 20 from 30 to get a remainder of 10.  
   - Bring down the next 0 to make it 100.  
   - 20 goes into 100 five times (since $20 \times 5 = 100$).  
   - Subtract 100 from 100 to get a remainder of 0.  

5. \textbf{Combine the results}: The quotient from the division is 0.15.  

Therefore, the decimal representation of $\frac{3}{20}$ is $\boxed{0.15}$.
} \\
\bottomrule
\end{tabular}
\caption{Comparison on MATH500: Qwen2.5-Math-7B. We present an example where our method and GRPO are correct.}\label{tab:math}
\end{table}

\newsavebox{\mcmcboxxlvi}
\begin{lrbox}{\mcmcboxxlvi}
  \begin{minipage}{\codeinner}
\begin{lstlisting}[style=py]
# Base cases
if n == 0:
    return 0
elif n == 1:
    return 0
elif n == 2:
    return 2
elif n == 3:
    return 0

# Initialize the first four elements
fib4_values = [0, 0, 2, 0]

# Compute the rest of the sequence iteratively
for i in range(4, n+1):
    next_value = (fib4_values[i-1] + fib4_values[i-2] +
                  fib4_values[i-3] + fib4_values[i-4])
    fib4_values.append(next_value)

# Return the n-th element
return fib4_values[n]
\end{lstlisting}
  \end{minipage}
\end{lrbox}

\newsavebox{\grpoboxxlvi}
\begin{lrbox}{\grpoboxxlvi}
  \begin{minipage}{\codeinner}
\begin{lstlisting}[style=py]
a, b, c, d = 0, 0, 2, 0
for _ in range(n):
    a, b, c, d = b, c, d, a + b + c + d
return d
\end{lstlisting}
  \end{minipage}
\end{lrbox}

\begin{table}[h!]
\centering
\setlength{\tabcolsep}{6pt}
\renewcommand{\arraystretch}{1.15}
\begin{tabular}{Q{1.1cm} C{\codecol} Q{1.2cm}}
\toprule
\multicolumn{3}{c}{%
  \parbox{\fullspan}{\textit{The Fib4 number sequence is a sequence similar to the Fibonacci sequence defined as: fib4(0)=0, fib4(1)=0, fib4(2)=2, fib4(3)=0, and for $n \ge 4$, fib4(n) = fib4(n-1) + fib4(n-2) + fib4(n-3) + fib4(n-4). Write a function to efficiently compute the n-th element without recursion.}}%
} \\
\midrule
System & Completion & Passed \\
\midrule
Ours & \makebox[\codecol][c]{\usebox{\mcmcboxxlvi}} & \texttt{true} \\
\midrule 
GRPO & \makebox[\codecol][c]{\usebox{\grpoboxxlvi}} & \texttt{false} \\
\bottomrule
\end{tabular}
\caption{HumanEval comparison on Phi-3.5-mini-instruct.}
\end{table}



\begin{table}[h!]
\centering
\setlength{\tabcolsep}{6pt}
\renewcommand{\arraystretch}{1.15}

\setlength{\answercol}{\dimexpr\textwidth - 1.1cm - 4\tabcolsep\relax}
\setlength{\answerinner}{0.95\answercol}

\begin{tabular}{Q{1.1cm} Z{\answercol}}
\toprule
\multicolumn{2}{c}{%
  \parbox{\fullspan}{\textit{Can you solve the following math problem? What integer $n$ satisfies $0 \le n < 18$ and
  \[
  n \equiv -11213141 \pmod{18}~?
  \]
  Please reason step by step, and put your final answer within \textbackslash boxed\{\{\}\}.} \text{\rm Correct answer: $\boxed{13}$}}%
} \\
\midrule
Method & Answer \\
\midrule
Ours &
\parbox{\answerinner}{\small
First, we perform the division of 11213141 by 18:
\[
11213141 \div 18 \approx 622952.277777\ldots
\]
The integer part of the quotient is 622952. Now, we multiply 622952 by 18 to find the product:
\[
622952 \times 18 = 11213136
\]
Next, we subtract this product from 11213141 to find the remainder:
\[
11213141 - 11213136 = 5
\]
So, the remainder when 11213141 is divided by 18 is 5. This means:
\[
11213141 \equiv 5 \pmod{18}
\]
Therefore:
\[
-11213141 \equiv -5 \pmod{18}
\]
Since we need a non-negative remainder, we convert \(-5\) to a positive equivalent by adding 18:
\[
-5 + 18 = 13
\]
Thus:
\[
-11213141 \equiv 13 \pmod{18}
\]
The integer \( n \) that satisfies \( 0 \le n < 18 \) and \( n \equiv -11213141 \pmod{18} \) is: $\boxed{13}$.
} \\
\midrule  
GRPO &
\parbox{\answerinner}{\small
Step 1: Find the remainder when 11213141 is divided by 18.  
We can use the property of modular arithmetic that states \( a \equiv b \pmod{m} \) if and only if \( a \mod m = b \mod m \). So, we need to compute \( 11213141 \mod 18 \).

To simplify this, we can use the fact that the remainder of a number when divided by 18 is the same as the remainder of the sum of its digits when divided by 18 (although this is not always the most efficient method, it can be used here for simplicity).

First, let's sum the digits of 11213141:
\[
1 + 1 + 2 + 1 + 3 + 1 + 4 + 1 = 14.
\]
Now, we need to find the remainder when 14 is divided by 18. Since 14 is less than 18, the remainder is simply 14. Therefore,
\[
11213141 \equiv 14 \pmod{18}.
\]

Step 2: Find the remainder when \(-11213141\) is divided by 18.  
Since \( 11213141 \equiv 14 \pmod{18} \), it follows that
\[
-11213141 \equiv -14 \pmod{18}.
\]
We need to convert \(-14\) to a positive remainder between 0 and 17. To do this, we add 18 to \(-14\):
\[
-14 + 18 = 4.
\]
Therefore,
\[
-11213141 \equiv 4 \pmod{18}.
\]

The integer \( n \) that satisfies \( 0 \le n < 18 \) and \( n \equiv -11213141 \pmod{18} \) is
\(\boxed{4}\).
} \\
\bottomrule
\end{tabular}
\caption{MATH500 comparison between our sampling algorithm and GRPO for Qwen2.5-Math-7B. Here is an example where GRPO gets an incorrect answer, while our sampling algorithm succeeds. Our sample answer uses a distinct method altogether.}
\end{table}


\lstdefinestyle{py}{
  language=Python,
  basicstyle=\ttfamily\small,
  columns=fullflexible,
  breaklines=true,
  showstringspaces=false,
  tabsize=2,
  frame=none
}


\end{document}